\newtheorem{prop}{Proposition}
\newcommand{\rev}[1]{{\color{black}{#1}}}
\newcommand{\etal}{\mbox{\emph{et al.\ }}}
\newtheorem{theorem}{Theorem}[section]
\newtheorem{lemma}[theorem]{Lemma}
\newtheorem*{assumption}{Assumption}
\theoremstyle{definition}
\theoremstyle{remark}
\renewcommand\footnoterule{%
  \kern-3\p@
  \hrule\@width \columnwidth
  \kern2.6\p@}
\begin{document}
\title{Cost-Sensitive Learning of Deep Feature Representations from Imbalanced Data}

\author{S.~H.~Khan,
	   ~M.~Hayat,%
       ~M.~Bennamoun,
       ~F.~Sohel%
       ~and~R.~Togneri
\IEEEcompsocitemizethanks{
\IEEEcompsocthanksitem 
S. H. Khan is with Data61, Commonwealth Scientific and Industrial Research Organization (CSIRO) and College of Engineering \& Computer Science, Australian National University, Canberra, ACT 0200, Australia.\protect\
E-mail: salman.khan @data61.csiro.au
\IEEEcompsocthanksitem 
 M. Bennamoun is with the School of Computer Science and Software Engineering, The University of Western Australia, 35 Stirling Highway, Crawley, WA 6009, Australia.\protect\\
E-mail: mohammed.bennamoun @uwa.edu.au
\IEEEcompsocthanksitem M. Hayat is with Human-Centered Technology Research Centre, University of Canberra, Bruce, Australia. \protect\\
Email: munawar.hayat@canberra.edu.au
\IEEEcompsocthanksitem R. Togneri is with the School of Electrical, Electronic and
Computer Engineering, The University of Western Australia,
35 Stirling Highway, Crawley, WA 6009, Australia.\protect\\
E-mail: roberto.togneri@uwa.edu.au
\IEEEcompsocthanksitem F. Sohel is with the School of Engineering and Information Technology, Murdoch University, 90 South St, Murdoch WA 6150, Australia.
\protect\\
E-mail: f.sohel@murdoch.edu.au}
}
\maketitle

\markboth{Journal of \LaTeX\ Class Files,~Vol.~6, No.~1, July~2015}%
{Shell \MakeLowercase{\textit{et al.}}: Bare Demo of IEEEtran.cls for Computer Society Journals}

{%
\begin{abstract}
Class imbalance is a common problem in the case of real-world object detection and classification tasks. Data of some classes is abundant making them an over-represented majority, and data of other classes is scarce, making them an under-represented minority. 
This imbalance makes it challenging for a classifier to appropriately learn the discriminating boundaries of the majority and minority classes.
In this work, we propose a \rev{cost-sensitive} deep neural network which can automatically learn robust feature representations for both the majority and minority classes.
During training, our learning procedure {jointly optimizes} the \rev{class-dependent} costs and the neural network parameters.
The proposed approach is applicable to both binary and multi-class problems without any modification. 
Moreover, as opposed to data level approaches, we do not alter the original data distribution which results in a lower computational cost during the training process.
We report the results of our experiments on six major image classification datasets and show that the proposed approach significantly outperforms the baseline {algorithms. Comparisons with popular data sampling techniques and \rev{cost-sensitive} classifiers demonstrate the} superior performance of our proposed method.
\end{abstract}

\begin{IEEEkeywords}
Cost-sensitive learning, Convolutional Neural Networks, Data imbalance, Loss functions.
\end{IEEEkeywords} }


\section{Introduction}
In most real-world classification problems, the collected data follows a long tail distribution i.e., data for few object classes is abundant while data for others is scarce.
This behaviour is termed the \emph{`class-imbalance problem'} and it is inherently manifested in nearly all of the collected {image classification databases} {(e.g., Fig. 1).
A multi-class dataset is said to be} \emph{`imbalanced'} or \emph{`skewed'} if some of {its (minority) classes,  in the training set, are heavily under-represented compared to other (majority) classes}. 
This skewed distribution of class instances forces the classification algorithms to be biased towards the majority classes. 
As a result, the {characteristics of the} minority classes are not adequately learned.

The class imbalance problem is of particular {interest} in \rev{real-world} scenarios, where it is {essential to correctly classify} examples from an infrequent but important minority class. 
{For instance, a particular cancerous lesion (e.g., a melanoma) which appears rarely during dermoscopy should not be mis-classified as benign (see Sec. \ref{sec:exp})}. Similarly, for a continuous surveillance task, a {dangerous activity which occurs occasionally should still be detected by the monitoring system}.
{The same applies to many other application domains, e.g., object classification, where the correct classification of a minority class sample is equally important to the correct classification of a majority class sample}.
{It is therefore required to enhance the overall accuracy of the system without unduly sacrificing the precision of any of the majority or minority classes}.
 {Most of the} classification algorithms try to minimize the overall classification error during the training process.
 {They, therefore,} implicitly assign an  identical misclassification cost to all types of errors assuming their {equivalent importance}. 
As a result the classifier tends to {correctly classify and favour} the more frequent classes.

{Despite the pertinence of the class imbalance problem to  practical computer vision, there have been very few research works on this topic in {the} recent years. 
{Class} imbalance is avoided in nearly all competitive datasets during the evaluation and training procedures (see Fig.~\ref{fig:intro}). }
{For instance, for the case of the} popular image classification datasets (such as CIFAR$-10/100$, ImageNet, Caltech$-101/256$, {and} MIT$-67$), efforts have been made by the collectors to {ensure that,} either all of the classes have a minimum representation {with} sufficient data{, or that} the experimental protocols are reshaped to use an equal number of images for all classes during the training and testing {processes} \cite{lin2014network, chatfield2014return,lee2015deeply}.
This approach {is reasonable in the case of datasets with only few classes, which have an equal probability} to appear in practical scenarios (e.g., digits in MNIST).
{However, with the increasing number of classes in the collected object datasets, it is becoming impractical to provide equal representations for all classes in the training and testing subsets. }
{For example, for a fine-grained coral categorization dataset,  endangered coral species have a significantly lower representation compared {to the more} abundant ones \cite{beijbom2012automated}.} 

\begin{figure*}
\centering

\begin{subfigure}[t]{0.48\columnwidth}
\includegraphics[width=\textwidth]{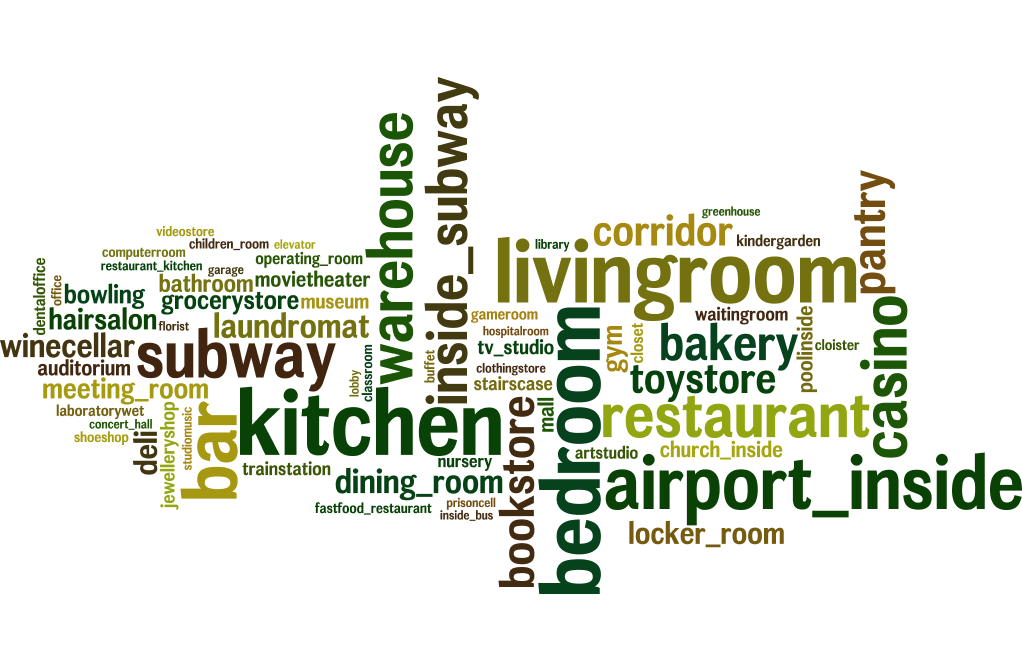}
\caption{Word cloud: MIT-67}
\label{fig:mit_word}
\end{subfigure}
\;
\begin{subfigure}[t]{0.48\columnwidth}
\includegraphics[width=\columnwidth]{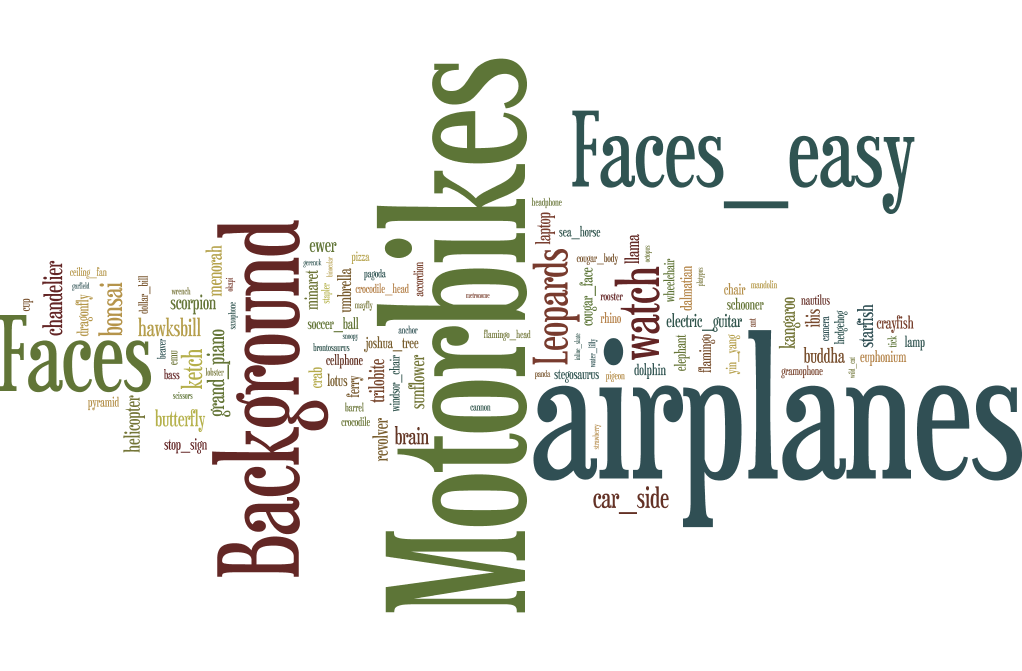}
\caption{Word cloud: Caltech-101}
\label{fig:caltech_word}
\end{subfigure}
\;
\begin{subfigure}[t]{0.48\columnwidth}
\includegraphics[width=\columnwidth]{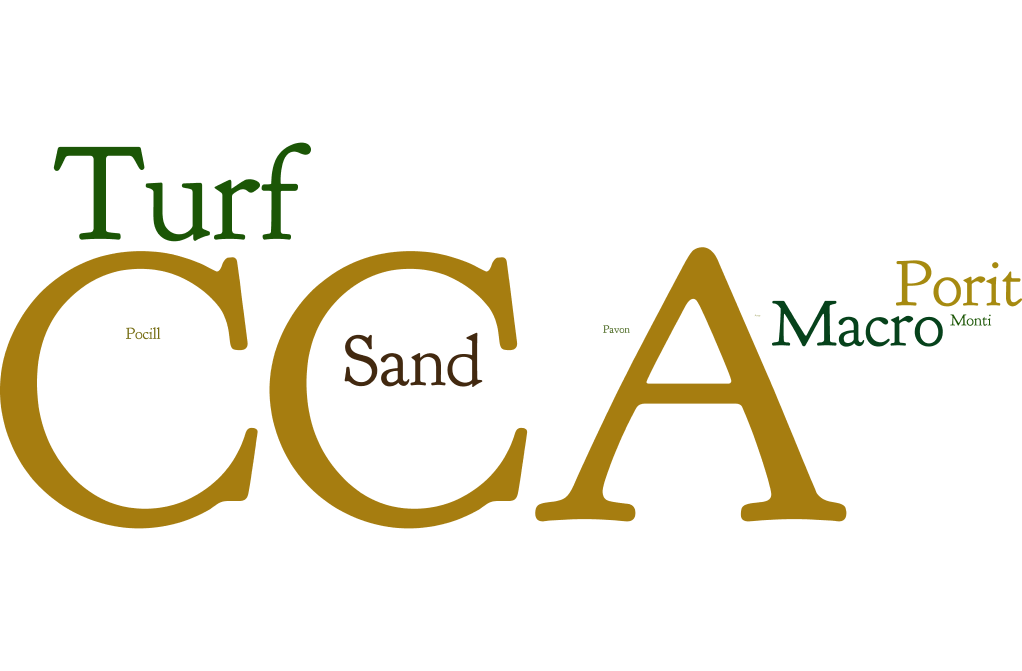}
\caption{Word cloud: MLC}
\label{fig:MLC_word}
\end{subfigure}
\;
\begin{subfigure}[t]{0.48\columnwidth}
\includegraphics[width=\columnwidth]{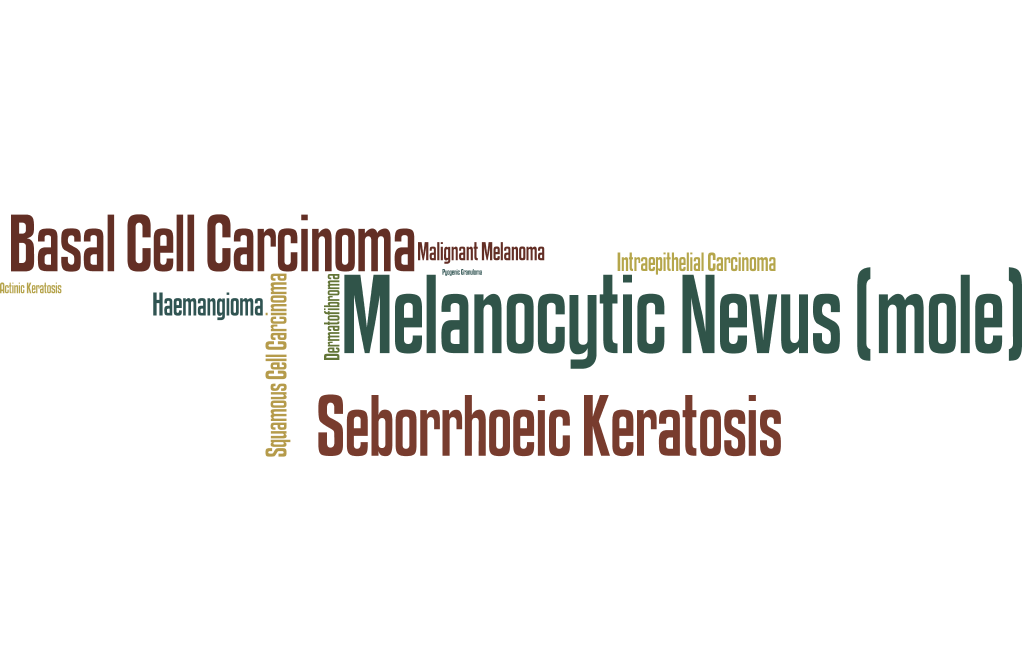}
\caption{Word cloud: DIL}
\label{fig:DIL_word}
\end{subfigure}
\\  
\begin{subfigure}[t]{0.48\columnwidth}
\includegraphics[width=\textwidth]{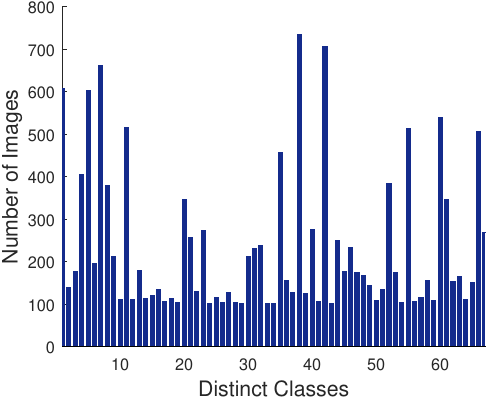}
\caption{Class frequencies}
\label{fig:mit_freq}
\end{subfigure}
\;
\begin{subfigure}[t]{0.48\columnwidth}
\includegraphics[width=\columnwidth]{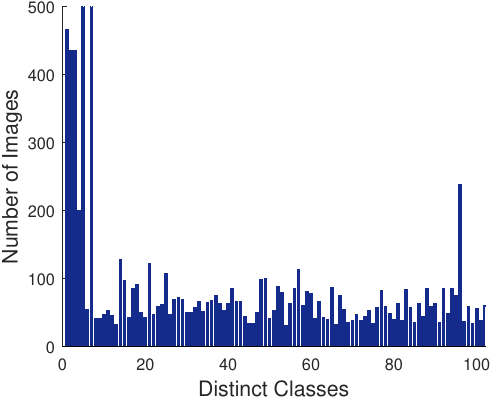}
\caption{Class frequencies}
\label{fig:caltech_freq}
\end{subfigure}
\;
\begin{subfigure}[t]{0.48\columnwidth}
\includegraphics[width=\columnwidth]{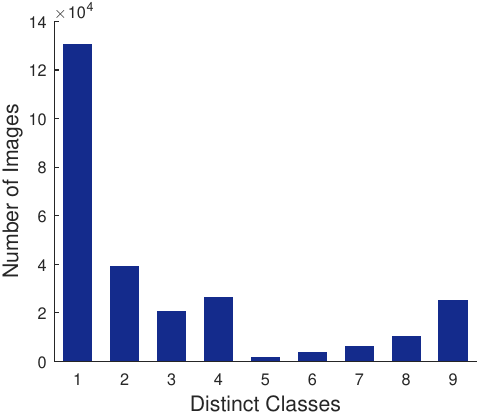}
\caption{Class frequencies}
\label{fig:MLC_freq}
\end{subfigure}
\;
\begin{subfigure}[t]{0.48\columnwidth}
\includegraphics[width=\columnwidth]{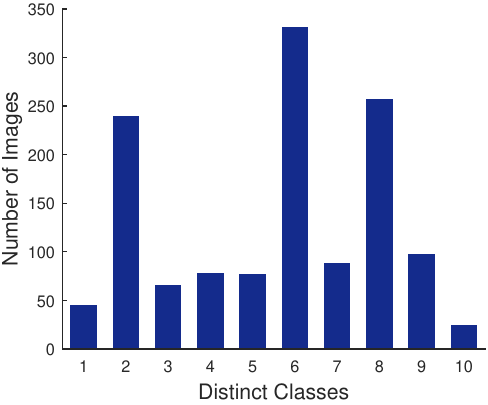}
\caption{Class frequencies}
\label{fig:DIL_freq}
\end{subfigure}
\caption{ Examples of popular classification datasets in which the number of images vary sharply across different classes. This class imbalance poses a challenge for classification and representation learning algorithms. }
\label{fig:intro} \vspace{-0.2cm}
\end{figure*}

In this work, we propose to jointly learn {robust} feature representations and classifier parameters, under a cost-sensitive setting. 
This {enables} us to learn not only an improved classifier that deals with the class imbalance problem, but also to extract suitably adapted intermediate feature representations from a deep Convolutional Neural Network (CNN).
{In this manner, we directly modify the learning procedure to incorporate \rev{class-dependent} costs during training. In contrast, previous works (such as \cite{chawla2002smote, khan2014automatic, zhang2013cost, ramentol2012smote}) only readjust the training data distribution to learn better classifiers. }
%
Moreover, unlike the methods {in e.g.,} \cite{zhou2006training, beijbom2012automated}, we do not use a  {handcrafted} cost matrix whose design is based on expert judgement and {turns into a tedious task for a large number of classes}. 
In our case, the \rev{class-dependent} costs are automatically set using {data statistics} (e.g., data distribution and separability {measures}) during the learning procedure.
%
Another major difference with {existing} techniques is that {our} class specific costs are only used during the training process and once the optimal CNN parameters are learnt, predictions can be made without any modification to the trained network. 
%
{From this perspective,}
our approach can be understood as a perturbation method, which forces the training  algorithm to learn more discriminative features. 
{Nonetheless,
it is clearly}
different from the common perturbation mechanisms used during training e.g., data distortions \cite{khan2015discriminative}, corrupted features \cite{maaten2013learning}, affine transformations \cite{krizhevsky2012imagenet} and activation dropout \cite{srivastava2014dropout}.

{Our contribution consists of the following:}
{\bf 1--} We introduce cost-sensitive versions of three widely used loss functions for joint cost-sensitive learning of features and classifier parameters in the CNN (Sec.~\ref{sec:CSL}). 
 We also show that the improved loss functions have desirable properties such as classification calibration and guess-aversion.
{\bf 2--} We analyse the effect of these modified loss functions on the back-propagation algorithm by deriving relations for propagated gradients (Sec.~\ref{sec:BPeff}).
{\bf 3--} We propose an algorithm for joint alternate optimization of the network parameters and the class-sensitive costs (Sec.~\ref{sec:learning}).
The proposed algorithm can automatically work for both binary and multi-class classification problems.
 We also show that the introduction of class-sensitive costs does not significantly affect the training and testing time of the original network (Sec.~\ref{sec:exp}).
{\bf 4--} The proposed approach has been extensively tested on six major classification datasets and has shown to outperform baseline procedures and state-of-the-art approaches (Sec.~\ref{subsec:results}). 

The remainder of this paper is organized as follows.  
We briefly discuss the related work in the next section. In Sec. \ref{sec:prob_for2} and \ref{subsec:propCM}, we introduce our proposed approach and analyse the modified loss functions in Sec.~\ref{sec:CSL}. The learning algorithm is then described in Sec.~\ref{sec:learning} and the CNN implementation details are provided in Sec.~\ref{sec:cnn}. Experiments and results are summarized in Sec.~\ref{sec:exp} and the paper concludes in Sec.~\ref{sec:conclusion}.
%
\section{Related Work}\label{sec:rel_work}

Previous research on the class imbalance problem has concentrated mainly on two levels: the data level and the algorithmic level \cite{garcia2007class}.
Below, we briefly discuss the different research efforts 
{that} tackle the class imbalance problem. 

{\bf Data level approaches} manipulate the class representations in the original dataset  by either over-sampling the minority classes or under-sampling the majority classes to make the resulting data distribution balanced \cite{garcia2007class}. 
However, these techniques change the original distribution of the data and 
{consequently} introduce 
drawbacks. 
While under-sampling can 
potentially {lose useful information about the} majority class data,  over-sampling makes the training computationally burdensome by artificially increasing the size of the training set. 
Furthermore, over-sampling is prone to cause over-fitting, when exact copies of the minority class are replicated randomly \cite{chawla2002smote, garcia2007class}. 

To address the over-fitting problem, Chawla \etal \cite{chawla2002smote} introduced a method, called SMOTE, to generate new instances by linear interpolation between closely lying minority class samples. 
These synthetically generated minority class instances may lie 
inside the convex hull of the majority class instances, a phenomenon known as \emph{over-generalization}.  
Over the years, several variants of {the} SMOTE algorithm have been proposed to solve this problem \cite{wang2014hybrid}. 
For example, Borderline SMOTE \cite{han2005borderline} only over-samples the minority class samples which lie close to the 
{class} boundaries. 
Safe-level SMOTE \cite{bunkhumpornpat2009safe} carefully generates synthetic samples in the so called \emph{safe-regions}, where the majority and minority class regions are not overlapping.
{The} local neighborhood SMOTE \cite{maciejewski2011local} considers the neighboring majority class samples when generating synthetic minority class samples and reports a better performance compared to the former variants of SMOTE.
{The} combination of  under and over sampling procedures (e.g., \cite{batista2004study,jeatrakul2010classification, ramentol2012smote}) to balance the training data have also 
shown to perform well.
{However, a drawback of these approaches is the increased computational \rev{cost that is} required for data pre-processing} and for the learning of a classification model.

{\bf Algorithm level approaches} directly modify the learning procedure to improve the 
{sensitivity}
of the classifier towards minority classes. 
Zhang \etal \cite{zhang2013cost} first divided the data into smaller balanced subsets, followed by intelligent sampling and a cost-sensitive SVM learning to deal with the imbalance problem. 
A neuro-fuzzy modeling procedure was introduced in \cite{gao2014construction} to perform leave-one-out cross-validation on imbalanced datasets. 
A scaling kernel along-with the standard SVM was used in \cite{zhang2014imbalanced} to improve the 
{generalization ability}
of learned classifiers 
{for}
skewed datasets. 
Li \etal \cite{li2014boosting} gave more importance to the minority class samples by  setting weights with Adaboost during the training of an extreme learning machine (ELM).
An ensemble of soft-margin SVMs was formed via boosting to perform well on both majority and minority classes \cite{wang2010boosting}.
These previous works hint towards the use of distinct costs for different training examples to improve the performance of the learning algorithm.
However, they do not address 
{the}
class imbalance learning of CNNs, which have recently emerged as the most popular tool for supervised classification, recognition and segmentation problems in computer vision \cite{zhang2014imbalanced, wu2003class, wu2005kba, krizhevsky2012imagenet,girshick2014rich}.
Furthermore,
they are mostly limited to the binary class problems \cite{huang2004learning,wang2010boosting}, do not perform joint feature and classifier learning and do not explore computer vision tasks which inherently have imbalanced class distributions.
\rev{In the context of neural networks, Kukar and Kononenko \cite{kukar1998cost} showed that the incorporation of costs in the error function improves performance. However, their costs are randomly chosen in multiple runs of the network and remain fixed during the learning process in each run.}
In contrast, this paper presents the first attempt 
{to incorporate automatic cost-sensitive learning in deep neural networks for imbalanced data}.

\rev{After the submission of this work for review, we note that a number of new approaches have recently been proposed to incorporate class-specific costs in the deep networks \cite{chung2015cost,wang2016training,raj2016towards}. 
 Chung \etal \cite{chung2015cost} proposed a new cost-sensitive loss function which replaces traditional soft-max with a regression loss. In contrast, this work extends the traditionally used cost-functions in CNN for the cost-sensitive setting. Wang \etal \cite{wang2016training} and Raj \etal  \cite{raj2016towards} proposed a loss function which gives equal importance to mistakes in the minority and majority classes. Different to these works, our method is more flexible because it automatically learns the balanced error function depending on the end problem. }

\section{Proposed Approach}\label{sec:prob_for}
\subsection{{Problem Formulation for  \rev{Cost-Sensitive} Classification}}\label{sec:prob_for2}
{
Let the cost $\xi'_{p,q}$ be used to denote the misclassification cost of classifying an instance belonging to a class $p$ into a different class $q$.
The diagonal of $\xi'$ (i.e., $\xi'_{p,p}, \forall p$) represents the benefit or utility for a correct prediction.
Given an input instance $\mathbf{x}$ and the cost matrix $\xi'$, the classifier seeks to minimize the expected risk $\mathcal{R}(p|\mathbf{x})$, where $p$ is the class prediction made by the classifier.
The expected risk can be expressed as:
 $$ \mathcal{R}(p|\mathbf{x}) = \sum\limits_{q} \xi'_{p,q}P(q|\mathbf{x}) , $$
 where, $P(q|\mathbf{x})$ is the posterior probability over all possible classes given an instance $\mathbf{x}$.
According to the Bayes decision theory, an ideal classifier will give a decision in favour of the class ($p^*$) with the minimum expected risk: 
\begin{align}\label{eq:ideal_clsf}
 p^* = \underset{p}{\operatorname{argmin}} \; \mathcal{R}(p|\mathbf{x}) = \underset{p}{\operatorname{argmin}} \; \mathbb{E}_{X,D}[\xi']
\end{align}
where, $X$ and $D$ define the input and output spaces respectively. 
Since, $P(q|\mathbf{x})$ cannot be found trivially, we make use of empirical distribution derived from the training data. Given a training dataset consisting of tuples comprising of data and label, $\mathcal{D} = \{\mathbf{x}^{(i)}, \mathbf{d}^{(i)}\}_M$ \rev{where $\mathbf{d}\in \mathbb{R}^N$}, we can define the empirical risk as follows:
\begin{align}
\hat{\mathcal{R}}_{\ell}(\mathbf{o}) = \mathbb{E}_{X,D}[\ell] = \frac{1}{M} \sum\limits_{i=1}^{M} \ell(\xi', \mathbf{d}^{(i)}, \mathbf{o}^{(i)}), 
\end{align}
where, $M$ is the total number of images, $\mathbf{o}^{(i)}\in \mathbb{R}^N$ is the neural network output for the $i^{th}$ sample and $\ell(\cdot)$ is the misclassification error ($0-1$ loss) or a surrogate loss function which is typically used during the classifier training. 
For the case of cost-insensitive $0-1$ loss, $\ell(\xi', \mathbf{d}^{(i)}, \mathbf{o}^{(i)}) =  \mathbb{I}(\mathbf{d}^{(i)}\neq \mathbf{o}^{(i)})$ and $\xi'$ is an $N\times N$ matrix, where $\xi'_{p,p} = 0$, and $\xi'_{p,q} = 1, \, \forall p\neq q$. \rev{Next, we briefly describe the properties of traditional used cost matrix $\xi'$, before introducing the proposed cost matrix.}

\subsubsection*{\textbf{Properties of the Cost Matrix $\mathbf{\xi'}$ }}
{Lemmas \ref{lem:zero} and \ref{lem:one} describe the main} properties of the cost matrix $\xi'$. Their proof can be found in Appendix~A
 (supplementary material).
\begin{lemma}\label{lem:zero}
Offsetting the columns of the cost matrix $\xi'$ by any constant `$c$' does not affect the associated classification risk $\mathcal{R}$.  
\end{lemma}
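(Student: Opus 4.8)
The plan is to show that offsetting a column of $\xi'$ adds a term to the conditional risk $\mathcal{R}(p|\mathbf{x})$ that is \emph{identical for every candidate prediction} $p$. Since the Bayes decision in Eq.~\eqref{eq:ideal_clsf} is obtained as an $\operatorname{argmin}$ over $p$, a prediction-independent shift cannot change the minimizer, and the induced classification behaviour is therefore unaffected. The argument is essentially Elkan's observation on column-shifts of cost matrices, adapted to the notation here.

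Concretely, I would first fix the interpretation of the notation: in $\mathcal{R}(p|\mathbf{x}) = \sum_q \xi'_{p,q}\,P(q|\mathbf{x})$, the index $p$ is the prediction over which we optimize, while the column index $q$ ranges over the true classes weighted by the posterior $P(q|\mathbf{x})$. Offsetting the columns by a constant $c$ then means replacing $\xi'_{p,q}$ by $\tilde{\xi}'_{p,q} = \xi'_{p,q} + c$ (the argument goes through verbatim if a distinct constant $c_q$ is applied to each column). Second, I would substitute into the risk and use linearity of the sum together with the fact that the posterior sums to one, $\sum_q P(q|\mathbf{x}) = 1$:
\begin{align}
\tilde{\mathcal{R}}(p|\mathbf{x}) = \sum_q (\xi'_{p,q}+c)\,P(q|\mathbf{x}) = \mathcal{R}(p|\mathbf{x}) + c\sum_q P(q|\mathbf{x}) = \mathcal{R}(p|\mathbf{x}) + c .
\end{align}
The added quantity $c$ (or $\sum_q c_q P(q|\mathbf{x})$ in the per-column case) does not depend on $p$.

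Third, I would conclude by applying the $\operatorname{argmin}$: since $\tilde{\mathcal{R}}(\cdot|\mathbf{x})$ and $\mathcal{R}(\cdot|\mathbf{x})$ differ only by a constant common to all predictions, their orderings over $p$ coincide, and in particular $\operatorname{argmin}_p \tilde{\mathcal{R}}(p|\mathbf{x}) = \operatorname{argmin}_p \mathcal{R}(p|\mathbf{x}) = p^*$. Hence the optimal decision, and thus the classification risk associated with the Bayes-optimal rule, is left invariant.

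The computation itself is elementary, so the only real subtlety — and the main thing to get right — is the bookkeeping of which index of $\xi'$ is minimized over versus which is averaged against the posterior. The statement is \emph{false} if one instead offsets the rows (the prediction index): a row offset adds a $p$-dependent term and generally moves the minimizer. Making explicit that ``columns'' are indexed by the true class $q$ is therefore the crux of the argument.
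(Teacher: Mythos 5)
Your proof is correct, but it takes a genuinely different route from the paper's. You use the standard Elkan-style argument: substituting $\xi'_{p,q}+c$ into $\mathcal{R}(p|\mathbf{x}) = \sum_q \xi'_{p,q} P(q|\mathbf{x})$ and using $\sum_q P(q|\mathbf{x})=1$ shows every candidate prediction's risk shifts by the same $p$-independent constant, so the full ordering over $p$ — and hence the $\operatorname{argmin}$ in Eq.~(\ref{eq:ideal_clsf}) — is preserved. The paper instead starts from the $N-1$ pairwise optimality inequalities $\mathcal{R}(p^*|\mathbf{x}) \leq \mathcal{R}(p|\mathbf{x})$, sums them into a single aggregated vector inequality of the form $\sum_i \xi'_{i,j} - N\xi'_{p^*,j} \geq 0$ weighted by the posterior, and observes that adding $c$ to column $j$ contributes $Nc - Nc = 0$ to each component. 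Your argument is both shorter and logically stronger: the paper's aggregation step only establishes that a \emph{necessary} condition for optimality is invariant (summing inequalities loses information), whereas your computation shows each individual risk comparison is invariant, which is what the lemma actually needs; it also serves the paper's downstream use of the lemma (setting the diagonal to zero column by column), since your per-column variant $c_q$ goes through verbatim. Your closing remark about rows versus columns correctly identifies the one place the argument could go wrong. One minor wording point common to both proofs: the numerical value of the risk does change by $c$; what is invariant is the induced decision rule, which is clearly the intended reading.
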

\noindent
For convenience, the {utility vector (i.e., the diagonal of the cost matrix)} for correct classification is usually set to zero with the help of {the property from Lemma~\ref{lem:zero}}. 
{We also show next} that even when the utility is not zero, it must satisfy the following condition:
\begin{lemma}\label{lem:one}
The cost of the true class  should be less than the mean cost of all misclassifications.
\end{lemma}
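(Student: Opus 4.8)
The plan is to derive the inequality from a single minimal design requirement: for cost-sensitive classification to be meaningful at all, always predicting the true label must cost strictly less than an uninformed decision. I would formalize ``uninformed'' as uniform random guessing over the $N$ classes, which ties the argument directly to the guess-aversion property already invoked for the modified losses. This turns a vague desideratum into a concrete comparison of two numbers, evaluated class by class.

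First I would fix a true class $p$ and write down the two relevant quantities. The cost of the correct decision is the utility entry $\xi'_{p,p}$, while the expected cost of guessing each label with probability $1/N$ is $\frac{1}{N}\sum_q \xi'_{p,q}$, the sum running over all $N$ candidate labels. Imposing that correct classification beats guessing gives $\xi'_{p,p} < \frac{1}{N}\sum_q \xi'_{p,q}$. I would then separate the diagonal term from the off-diagonal (misclassification) terms, $\frac{1}{N}\sum_q \xi'_{p,q} = \frac{1}{N}\bigl(\xi'_{p,p} + \sum_{q\neq p}\xi'_{p,q}\bigr)$, and clear denominators; the $\xi'_{p,p}$ appearing on both sides collapses the factor $N$ to $N-1$, leaving
\[ \xi'_{p,p} < \frac{1}{N-1}\sum_{q\neq p}\xi'_{p,q}, \]
which is exactly the claim that the cost of the true class lies below the mean of the misclassification costs for that class. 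To make the statement most transparent I would invoke Lemma~\ref{lem:zero}: the inequality is invariant under adding a common constant to the entries tied to class $p$ (both sides shift equally), so one may set the utility $\xi'_{p,p}=0$ without loss of generality, reducing the condition to the intuitive requirement that the average misclassification cost is positive. This also matches the phrasing that the lemma governs the case when the utility is \emph{not} already zeroed out.

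As an alternative (and a consistency check) I would note the stronger Bayes route: requiring that the correct label be the risk-minimizer whenever the posterior $P(\cdot\mid\mathbf{x})$ is a point mass forces every individual misclassification entry to dominate the utility, from which the mean condition follows immediately. I would present the random-guessing derivation as the primary one, since it reproduces precisely the $N-1$ denominator and the ``mean'' appearing in the statement, rather than the stronger per-entry bound.

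The main obstacle I anticipate is not the algebra but fixing the conventions tightly enough that the argument is airtight. The text indexes $\xi'$ in two ways (true-then-predicted in the opening sentence, prediction-then-true in the risk formula), so I must be explicit about which entries constitute the ``misclassifications of the true class'' — the row versus the column of $\xi'$ — so that Lemma~\ref{lem:zero} is applied to exactly the set of entries that shift together. A secondary subtlety is the strict versus non-strict inequality: the strict form excludes the degenerate cost matrix in which every misclassification costs the same as a correct prediction, and I would dispatch this by observing that such an information-free matrix is ruled out by the guess-aversion requirement that drives the whole argument.
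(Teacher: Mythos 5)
Your argument is correct and yields the stated inequality, but it is a genuinely different route from the paper's. The paper does not postulate a comparison against random guessing; it recycles the machinery from the proof of Lemma~\ref{lem:zero}: the Bayes-optimality of $p^*$ gives $N-1$ risk inequalities, summing them produces a single inequality of the form $\mathbf{P}_{\mathbf{x}}\,\mathbf{v}\geq 0$ with $v_j=\sum_i \xi'_{i,j}-N\xi'_{p^*,j}$, and since the posterior $\mathbf{P}_{\mathbf{x}}$ is arbitrary each component must be non-negative; specializing to $j=p^*$ gives $\xi'_{p^*,p^*}\leq \frac{1}{N}\sum_i \xi'_{i,p^*}$, which rearranges to the stated mean bound (with a non-strict inequality, whereas your guess-aversion premise yields a strict one). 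Note also that the paper's ``mean of all misclassifications'' is taken over the \emph{column} of the true class, i.e.\ over the entries $\xi'_{i,p^*}$ for all predictions $i$ --- consistent with the risk formula $\mathcal{R}(p|\mathbf{x})=\sum_q \xi'_{p,q}P(q|\mathbf{x})$ --- while your derivation averages over the row; you correctly flag this indexing ambiguity, and the two forms are equivalent after transposing conventions, but if you follow the risk formula your ``expected cost of uniform guessing given true class $p$'' should be $\frac{1}{N}\sum_i \xi'_{i,p}$. What your approach buys is self-containment and a direct conceptual link to the guess-aversion property used later for the losses; what the paper's buys is that the condition falls out as a corollary of the same Bayes-risk computation that establishes the column-shift invariance, with no additional axiom about guessing needed.
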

\noindent
{Finally,} using Lemmas~\ref{lem:zero} and \ref{lem:one}, we assume {the following}:
\begin{assumption}
All costs are non-negative i.e., $\xi' \succeq 0$.
\end{assumption}

{The entries of a
 traditional cost matrix (defined according to the properties above) 
usually have the form} of: 
\begin{align}
\xi' = \left\{ 
	\begin{array}{c c}
		\xi'_{p,q} = 0  &  p=q\\
		\xi'_{p,q} \in  \mathbb{N} & p\neq q. \\
	\end{array} 
			\right. 
\end{align} 
Such 
cost matrix can potentially increase the 
corresponding loss to 
{a large} value. 
 During the CNN training, this network loss can make the training process unstable and can lead to {the} non-convergence of the error function. This requires the introduction of an alternative cost matrix.
 \subsection{{Our Proposed Cost Matrix}}\label{subsec:propCM}
{We propose a new cost matrix $\xi$, 
 which is suitable for CNN training. The  cost matrix $\xi$ is 
 used to modify the output 
{of} the last layer of {a} CNN (before the softmax and the loss layer) }
 (Fig.~\ref{fig:overview_fig}). 
 The resulting activations are then squashed 
 {between} $[0,1]$ before the computation of the classification loss.

For the case of a CNN, the classification decision is made in favour of the class 
{with the maximum classification score.}
During {the} training {process,} the classifier weights are modified in order to reshape the classifier confidences {(class probabilities)} such that the desired class has {the} maximum score and the 
{other} classes have {a} considerably lower score.  
However, since the less frequent classes are under-represented in the training set, we introduce new `score-level costs' to encourage the correct classification of infrequent classes. 
Therefore the CNN outputs ($\mathbf{o}$) are modified using the cost matrix ($\xi$) according to a function ($\mathcal{F}$) as follows:
$$
 \mathbf{y}^{(i)} = \mathcal{F}(\xi_p, \mathbf{o}^{(i)}), \quad
: \;  y_{p}^{(i)} \geq y_{j}^{(i)}, \, \forall j \neq p,
 $$
  where, $\mathbf{y}$ denotes the modified  output, $p$ is the desired class and $\mathcal{F} : \mathbb{R}\rightarrow\mathbb{R}$ represents a function whose exact definition depends on the type of loss layer. As an example, for the case of \rev{cost-sensitive} MSE loss, $\mathcal{F}(\xi_p, \mathbf{o}^{(i)}) = \text{sigmoid}(\xi_p \circ \mathbf{o}^{(i)})$, where $\circ$ denotes the hadamard product.  In Sec.~\ref{sec:CSL}, we will discuss in detail the definition of $\mathcal{F}$ for different surrogate losses.
 Note that the \rev{score-level} costs perturb the classifier confidences.
 Such perturbation allows the classifier to give more importance to the less frequent and difficult-to-separate classes.

\subsubsection*{\textbf{Properties of the {Proposed} Cost Matrix $\xi$}}
 Next, we discuss few properties {(lemmas~\ref{lem:three} --~\ref{lem:six})} of the newly introduced cost matrix $\xi$ and its similarities/differences with the traditionally used cost matrix $\xi'$ {(Sec.~\ref{sec:prob_for2})}. The \rev{proofs of} below mentioned properties can be found in Appendix~A (supplementary material):
 \begin{lemma}\label{lem:three}
 The cost matrix $\xi$ for a cost-insensitive loss function is an all-ones matrix, $\mathbf{1}^{p\times p}$, rather than a $\mathbf{1}-\mathbf{I}$ matrix,  
{as in} the case of the traditionally used cost matrix $\xi'$. 
 \end{lemma}
 \begin{lemma}\label{lem:four}
 All costs in $\xi$ are positive, i.e., $\xi \succ 0$. 
 \end{lemma}
 \begin{lemma}
 The cost matrix $\xi$ is defined such that all {of its} elements in are within the range $(0,1]$, i.e., $ \xi_{p,q} \in (0,1]$.
 \end{lemma}
 \begin{lemma}\label{lem:six}
 Offsetting the columns of {the} cost matrix $\xi$ can lead to an equally probable guess point. 
 \end{lemma}
\vspace{.13cm}
{The cost matrix $\xi$ configured according to the properties described above (Lemma~\ref{lem:three} --~\ref{lem:six}) neither excessively increases the CNN outputs activations, nor does it reduce them to zero output values. This
enables a smooth training process allowing the model parameters to be correctly updated. In the following section, we analyse the implications of the newly introduced cost matrix $\xi$ on the loss layer (Fig.~\ref{fig:overview_fig}).}

 \begin{figure}[t]
 \centering
 \includegraphics[width=0.85\columnwidth]{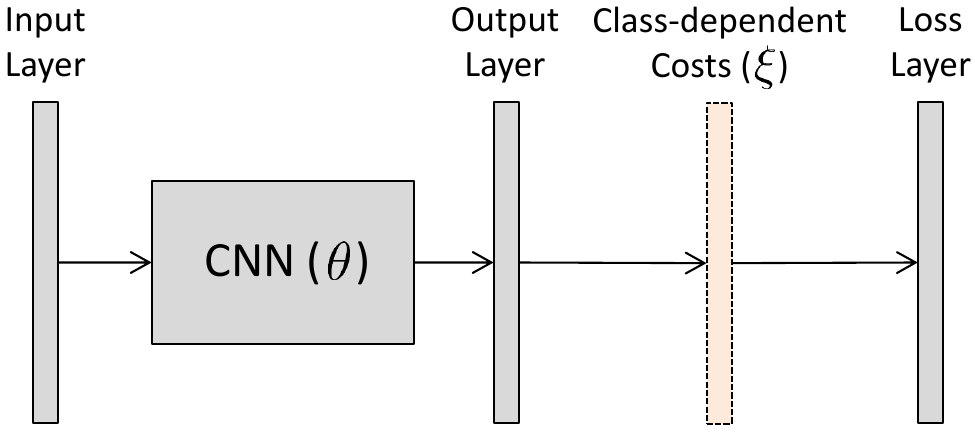}
 \caption{The CNN parameters ($\theta$) and \rev{class-dependent} costs ($\xi$) used during the training process of our deep network. {Details about the CNN architecture and the loss layer are in Sec.~\ref{sec:cnn} and \ref{sec:CSL}, respectively}} 
 \label{fig:overview_fig}\vspace{-0.2cm}
 \end{figure}

\subsection{Cost-Sensitive Surrogate Losses}\label{sec:CSL}
{Our approach addresses} the class imbalance problem during the training of CNNs.
For this purpose, we introduce a \rev{cost-sensitive} error function which can be expressed as the mean loss over the training set:
\begin{align} \label{eq:error_fun}
E(\theta, \xi) = \frac{1}{M}\sum\limits_{i=1}^{M} \ell(\mathbf{d}^{(i)}, \mathbf{y}_{\theta,\xi}^{(i)}),
\end{align}
where, the predicted output ($\mathbf{y}$) of the penultimate layer (before the loss layer) is parameterized by $\theta$ (network weights and biases) and $\xi$ (class sensitive costs), $M$ is the total number of training examples, $\mathbf{d} \in \{0,1\}^{1\times N}$ is the desired output (s.t. $\sum_n d_n \coloneqq 1$) and $N$ denotes the total number of neurons in the output layer.
{For conciseness, we will not explicitly mention the dependence of $\mathbf{y}$ on the parameters ($\theta, \xi$) and {only consider} a single data instance in the discussion below}. 
Note that the error is larger when the model performs poorly on the training set.
The objective of the learning algorithm is to find the optimal parameters $(\theta^*, \xi^*)$ which give the minimum possible cost $E^*$ (Eq.~(\ref{eq:error_fun})). Therefore, the optimization objective is given by:
\begin{align}\label{eq:opt_obj}
(\theta^*, \xi^*) = \underset{\theta, \xi}{\arg\min} \; E(\theta, \xi).
\end{align}

{The loss function $\ell(\cdot)$ in Eq.~(\ref{eq:error_fun}) can be any suitable surrogate loss such as the Mean Square Error (MSE), Support Vector Machine (SVM) hinge loss or a Cross Entropy (CE) loss (also called the  {`soft-max log loss'}).
These popular loss functions are shown along-with other surrogate losses in Fig.~\ref{fig:SurrogateLF}.
The \rev{cost-sensitive} versions of these loss functions are discussed below:}

\vspace{4pt}
\noindent
\textbf{(a) \rev{Cost-Sensitive} MSE loss:} 
{This loss} minimizes the squared error of the predicted output with the desired ground-truth and can be expressed as follows:
\begin{align} \label{eq:MSE_loss} 
\ell(\mathbf{d}^{}, \mathbf{y}_{}^{}) =  \frac{1}{2}\sum\limits_{n}( {d}^{}_{n} - y_n^{}) ^2
\end{align}
where, $y^{}_n$ is related to the output of the previous layer $o^{}_n$ via the logistic function, 
\begin{align}
y^{}_n = \frac{1}{1 + \exp(-{o}_{n}^{} \xi_{p,n})}, \end{align}
where, $\xi$ is the class sensitive penalty which depends on the desired class of a particular training sample, i.e., $p = {\operatorname{argmax}_m\; d_m}$. 
The effect of this cost on the back-propagation algorithm is discussed in Sec.~\ref{subsec:csmseloss}.

\vspace{4pt}
\noindent
\textbf{(b) \rev{Cost-Sensitive} SVM hinge loss:}
{This loss} maximizes the margin between each pair of classes and can be expressed as follows:
\begin{align} \label{eq:svm_loss}
\ell(\mathbf{d}^{}, \mathbf{y}_{}^{}) = - \sum\limits_{n} \max(0, 1 - (2d_n^{} - 1) y_n^{}) ,
\end{align}
where, $y_n$ can be represented in terms of the previous layer output $o^{}_n$ and the cost $\xi$, as follows: 
\begin{align}
y_n = o^{}_n\xi_{p,n}.
\end{align} 
The effect of the introduced cost on the gradient computation is discussed in Sec.~\ref{subsec:cssvmloss}.

\vspace{4pt}
\noindent
\textbf{(c) \rev{Cost-Sensitive} CE loss:}
{This loss} maximizes the closeness of the prediction to the desired output and is given by:
\begin{align}\label{eq:softmaxloss}
\ell(\mathbf{d}^{}, \mathbf{y}_{}^{}) = - \sum\limits_{n} ({d}^{}_{n} \log y^{}_{n}) ,
\end{align}
where $y_n$ incorporates the \rev{class-dependent} cost ($\xi$) and is related to the output $o_n$ via the soft-max function, 
\begin{align}\label{eq:softmax}
y^{}_{n} = \frac{\xi_{p,n}\exp({{o}_{n}^{}})}{\sum\limits_{k} \xi_{p,k} \exp({{o}_{k}^{}})} .
\end{align}
The effect of the modified CE loss on the back-propagation algorithm is discussed in Sec.~\ref{subsec:csceloss}.

\begin{figure}
\centering
\includegraphics[width=0.8\columnwidth]{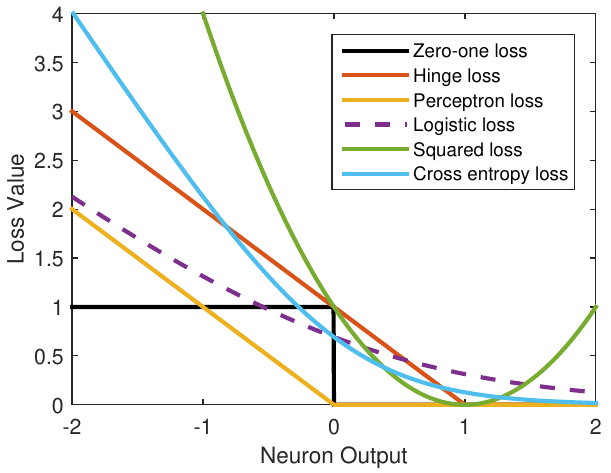}
\caption{{This figure shows the 0-1 loss along-with several other common surrogate loss functions that are used for binary classification.}} 
\label{fig:SurrogateLF}\vspace{-0.2cm}
\end{figure}

\subsubsection*{\textbf{{Classification Feasibility of Cost-Sensitive Losses}}}
Next, we show {(Lemmas~\ref{lem:seven}--\ref{lem:nine})} that the cost-sensitive loss functions remain suitable for classification since they satisfy the following properties: 
\begin{enumerate}
\item Classification Calibration \cite{bartlett2006convexity}
\item Guess Aversion \cite{beijbom2014guess}
\end{enumerate}
Note that a classification calibrated (c-calibrated) loss is useful because the minimization of {the} empirical risk leads to classifiers which have risks {that are} closer to the Bayes-risk.
Similarly, guess aversion {implies} that {the} loss function 
{favours} `correct classification' 
{instead of `arbitrary guesses'}.
Since, CE loss usually performs best among the three loss functions we discussed above \cite{nielsen2014neural,lee2015deeply}, 
{Lemmas~\ref{lem:seven}--\ref{lem:nine} show} that the \rev{cost-sensitive} CE loss is guess aversive and classification calibrated.

\begin{lemma}\label{lem:seven}
For a real valued $\xi$ ($\xi \in \mathbb{R}^{C \times C} \in (0,1]$), given $\mathbf{d}^{(i)}$ and {the} CNN output $\mathbf{o}^{(i)}$, the modified \rev{cost-sensitive} CE loss will be guess-averse iff,
$$
L(\xi, \mathbf{d}^{}, \mathbf{o}^{}) < 
L(\xi, \mathbf{d}^{}, \mathbf{g}) ,
$$
where, $\mathbf{g}$ is the set of all guess points. 
\end{lemma}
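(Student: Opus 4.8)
The plan is to unpack the definition of guess-aversion (from \cite{beijbom2012automated}) and show that, for the cost-sensitive CE loss of Eqs.~(\ref{eq:softmaxloss})--(\ref{eq:softmax}), it reduces \emph{exactly} to the displayed inequality. First I would fix notation: let $p = \operatorname{argmax}_m d_m$ be the true class of the instance, so $\mathbf{d}$ is the one-hot vector with $d_p = 1$, and recall that the cost-sensitive soft-max probability is $y_n = \xi_{p,n}\exp(o_n) / \sum_k \xi_{p,k}\exp(o_k)$. Substituting the one-hot $\mathbf{d}$ into $\ell$ collapses the outer sum to a single surviving term, so that $L(\xi,\mathbf{d},\mathbf{o}) = -\log y_p$, which is a strictly monotonically decreasing function of the true-class posterior $y_p$. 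This reduction is the lever for everything that follows.

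Next I would characterize the guess set. A guess point $\mathbf{g}$ is a configuration at which the classifier assigns equal posterior mass to every class, i.e.\ $y_n = 1/N$ for all $n$ (consistent with the ``equally probable guess point'' of Lemma~\ref{lem:six}). Because the costs enter multiplicatively, the right notion here is that the \emph{scaled} activations $\xi_{p,n}\exp(g_n)$ are constant in $n$, not that the raw activations are equal; since $\xi_{p,n}\in(0,1]$ is strictly positive (Lemma~\ref{lem:four}), setting $g_n = -\log\xi_{p,n}$ makes $\xi_{p,n}\exp(g_n)=1$ for all $n$, so a guess point always exists and the posterior is genuinely uniform. Evaluating the loss there is the key computation: with uniform $y_n=1/N$ and one-hot $\mathbf{d}$, every guess point satisfies $L(\xi,\mathbf{d},\mathbf{g}) = -\log(1/N) = \log N$, \emph{independent} of the particular cost values. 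This cost-free threshold is what makes the criterion clean.

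With both sides evaluated, the equivalence is almost immediate. The displayed inequality $L(\xi,\mathbf{d},\mathbf{o}) < L(\xi,\mathbf{d},\mathbf{g})$ becomes $-\log y_p < \log N$, i.e.\ $y_p > 1/N$. For the forward direction I would invoke the definition of guess-aversion directly: a loss is guess-averse precisely when the correct prediction incurs strictly smaller loss than every guess point, which is the stated inequality verbatim. For the converse I would argue that $y_p > 1/N$ means the true class receives strictly more than the uniform share of the posterior, so the loss is minimized by driving the prediction away from the guess set toward the correct label; hence the loss ``favours correct classification over arbitrary guesses,'' which is guess-aversion. The range constraint $\xi_{p,n}\in(0,1]$ (Lemmas~\ref{lem:four}--\ref{lem:six}) is used only to guarantee that each $y_n$ is a well-defined, strictly positive probability, so the logarithms never diverge.

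The step I expect to be the main obstacle is pinning down the guess set correctly once the costs are present: one must verify that ``equal scores'' should be read in the cost-scaled space $\xi_{p,n}\exp(o_n)$ (so that the definition stays consistent with Lemma~\ref{lem:six}), and that under this reading the guess-point loss is genuinely $\log N$ without picking up a spurious dependence on $\xi$. Once that is settled, the reduction to $y_p > 1/N$ and both directions of the iff are routine.
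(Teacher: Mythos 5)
There is a genuine gap, and it sits exactly where you predicted trouble: the characterization of the guess set. In the paper (and in the guess-aversion notion of \cite{beijbom2012automated} that it invokes), a guess point is a constant vector of \emph{raw activations}; since the cost-sensitive softmax of Eq.~(\ref{eq:softmax}) is invariant to shifting all $o_k$ by the same constant, the paper evaluates the loss at $\mathbf{o}=\mathbf{0}$ and obtains the guess-point loss $-\log\left(\xi_{p,p}/\sum_k\xi_{p,k}\right)$. Its dependence on $\xi$ is not spurious; it is the point, because the costs reweight the posterior even at a tied score vector. You instead define guess points as activations yielding a \emph{uniform posterior}, forcing $g_n=-\log\xi_{p,n}$ and the cost-free threshold $\log N$. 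That is a different set and it changes the claim: your criterion $y_p>1/N$ can fail for outputs that correctly rank the true class highest. Take $N=2$, $\xi_{p,p}=0.1$, $\xi_{p,q}=1$, $o_p=1$, $o_q=0.5$: then $y_p\approx 0.14<1/2$, so your inequality fails and the loss would (wrongly) be declared not guess-averse, whereas the paper's inequality $-\log y_p<-\log\left(\xi_{p,p}/\sum_k\xi_{p,k}\right)$ holds ($1.96<2.40$).

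The secondary gap is that you never actually verify the inequality for the cost-sensitive CE loss: your forward direction restates the definition verbatim and your converse is an informal appeal to ``favouring correct classification.'' The substantive content of the paper's proof is the verification that for any output with $o_n>o_k$ for all $k\neq n$ ($n$ the true class),
$$
\frac{\xi_{p,n}\exp(o_n)}{\sum_k\xi_{p,k}\exp(o_k)}>\frac{\xi_{p,n}}{\sum_k\xi_{p,k}},
$$
which follows by cross-multiplying and using $\xi_{p,k}>0$ (Lemma~\ref{lem:four}), since it reduces to $\sum_k\xi_{p,k}\left(\exp(o_n)-\exp(o_k)\right)>0$. Your opening reduction $L(\xi,\mathbf{d},\mathbf{o})=-\log y_p$ and the monotonicity observation are correct and match the paper, but without the paper's guess set and this verification step the argument does not go through.
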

\begin{proof}
For real valued CNN activations, the guess point maps to an all zero output:
$$
L(\xi, \mathbf{d}^{}, \mathbf{o}^{}) < 
L(\xi, \mathbf{d}^{}, \mathbf{0}) ,
$$
$$
- \log \left( \frac{\xi_{p,n} \exp (o_n^{})}{\sum_{k} \xi_{p,k} \exp (o_k)}\right) <
-\log \left( \frac{\xi_{p,n} }{\sum_{k} \xi_{p,k}}\right),
$$
which can be satisfied if, 
$$
\frac{\xi_{p,n} \exp (o_n)}{\sum_{k} \xi_{p,k} \exp (o_k^{})} >  \frac{\xi_{p,n} }{\sum_{k} \xi_{p,k}}.
$$
where, $n$ is the true class. Since, $\xi_{p,n} \in (0,1] $ and thus it is $> 0$.
Also, if $n$ is the true class then $o_n > o_k, \, \forall k \neq n$. 
{Therefore, the above relation holds true}.
\end{proof}
\begin{lemma}
The cost matrix has diagonal entries greater than zero, i.e., $\text{diag}(\xi) > 0$. 
\end{lemma}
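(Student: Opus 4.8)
The plan is to obtain this as a direct consequence of Lemma~\ref{lem:four}, observing that the diagonal entries $\xi_{p,p}$ are nothing but the special case $q=p$ of the general entries $\xi_{p,q}$. First I would recall that Lemma~\ref{lem:four} already establishes $\xi \succ 0$, i.e.\ $\xi_{p,q} > 0$ for every index pair $(p,q)$; specializing to $q=p$ then yields $\xi_{p,p} > 0$ for all $p$, which is exactly the claim $\text{diag}(\xi) > 0$. In this sense the statement is essentially a corollary of the earlier positivity lemma rather than an independent result, and there is no nontrivial estimate to carry out.

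To make the argument self-contained and to tie it to the surrounding guess-aversion discussion, I would additionally exhibit \emph{why} a vanishing diagonal entry is incompatible with the cost-sensitive CE loss. Consider a training sample whose true class is $p$. Its soft-max output for the correct neuron (Eq.~(\ref{eq:softmax})) is $y_p = \xi_{p,p}\exp(o_p)/\sum_k \xi_{p,k}\exp(o_k)$, and the incurred loss is $-\log y_p$. If $\xi_{p,p}=0$, then $y_p=0$ for every activation $\mathbf{o}$, so the loss diverges to $+\infty$ and the back-propagated gradient becomes uninformative, mirroring precisely the degeneracy used in the proof of Lemma~\ref{lem:four}. Hence any admissible cost matrix yielding a finite, trainable loss must possess a strictly positive diagonal, which reinforces the conclusion from an operational standpoint.

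I would close by noting the consistency with the adjacent lemmas: the positive diagonal is exactly the condition silently invoked in the proof of Lemma~\ref{lem:seven}, where the true-class cost $\xi_{p,n}$ with $n=p$ is required to be strictly positive for guess-aversion to hold. The present lemma therefore records, as a standalone property, the hypothesis that guess-aversion relies upon. The only real obstacle here is presentational rather than mathematical: I must take care that the one-line citation of Lemma~\ref{lem:four} is not mistaken for circular reasoning with Lemma~\ref{lem:seven}. To avoid this, I would ground the proof in Lemma~\ref{lem:four}, which is established independently by contradiction, rather than deriving it from the guess-aversion lemma.
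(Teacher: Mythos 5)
Your proof is correct, but your primary route differs from the paper's. You deduce $\text{diag}(\xi)>0$ as an immediate specialization of Lemma~\ref{lem:four} ($\xi \succ 0$), which is logically airtight and non-circular since Lemma~\ref{lem:four} is proved independently via the gradient-degeneracy argument. The paper instead derives the positive diagonal as a \emph{necessary condition for guess-aversion}: it assumes the CE loss satisfies $L(\xi,\mathbf{d},\mathbf{o}) < L(\xi,\mathbf{d},\mathbf{0})$, supposes $\xi_{p,n}=0$ for the true class $n$, and observes that both softmax ratios collapse to zero so the strict inequality fails. That version gives the lemma more standalone content in the ``classification feasibility'' discussion -- it shows the positive diagonal is forced by guess-aversion alone, without invoking the blanket positivity of Lemma~\ref{lem:four} -- whereas your version buys brevity at the cost of making the lemma a trivial corollary. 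Notably, your second paragraph (if $\xi_{p,p}=0$ then $y_p=0$ for every $\mathbf{o}$, the loss diverges, and the guess-aversion inequality degenerates) essentially reproduces the paper's actual argument, so your write-up subsumes both routes; your closing remark correctly identifies that the guess-aversion connection is what the lemma is really recording in context.
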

\begin{proof}
According to Lemma~\ref{lem:zero}, if the CE loss is guess aversive, it must satisfy ,
$$
L(\xi, \mathbf{d}^{}, \mathbf{o}^{}) < 
L(\xi, \mathbf{d}^{}, \mathbf{0}).
$$
We prove the Lemma {by contradiction}.
Let us suppose {that} 
$\xi_{p,n} = 0$, then the above relation does not hold true, since:
$$
\frac{\xi_{p,n} \exp (o_n)}{\sum_{k} \xi_{p,k} \exp (o_k)} =  \frac{\xi_{p,n} }{\sum_{k} \xi_{p,k}} = 0.
$$
{and hence, $\text{diag}(\xi) > 0$.}
\end{proof}

\begin{lemma}\label{lem:nine}
The \rev{cost-sensitive} CE loss function 
$$
\ell(\xi, \mathbf{d}^{}, \mathbf{o}^{}) = - \sum\limits_{n} {d}^{}_{n} \log \left( \frac{\xi_{p,n}\exp({{o}_{n}^{}})}{\sum\limits_{k} \xi_{p,k} \exp({{o}_{k}^{}})} \right) ,
$$
is C-Calibrated.
\end{lemma}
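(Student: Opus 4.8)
The plan is to work with the definition of classification calibration of Bartlett et al.~\cite{bartlett2006convexity} and its multiclass extension: the surrogate is c-calibrated if, for every posterior $P(\cdot|\mathbf{x})$, the minimizer of the \emph{conditional} surrogate risk induces a decision that agrees with the Bayes (max-posterior) rule $\operatorname{argmax}_q P(q|\mathbf{x})$. Concretely, I would fix an instance $\mathbf{x}$, write $p_q \coloneqq P(q|\mathbf{x})$, and note that a one-hot label $\mathbf{d}=\mathbf{e}_q$ forces the true class to be $p=q$, so the $q$-th term of the risk uses row $q$ of $\xi$. The conditional risk of the cost-sensitive CE loss is then
\begin{align}
W(\mathbf{o}) = -\sum_{q} p_q \log\left( \frac{\xi_{q,q}\exp(o_q)}{\sum_k \xi_{q,k}\exp(o_k)} \right),
\end{align}
and the goal is to show that its minimizer $\mathbf{o}^*$ satisfies $\operatorname{argmax}_n y_n(\mathbf{o}^*) = \operatorname{argmax}_q p_q$.

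First I would record that $W$ is convex in $\mathbf{o}$: each summand is a negative log-softmax, i.e.\ a linear term plus a log-sum-exp with positive weights $\xi_{q,k}$, hence convex, so the weighted sum is convex and the minimizer is well defined up to the usual additive shift invariance of the soft-max. Next I would impose the stationarity conditions $\partial W/\partial o_n = 0$, which simplify to
\begin{align}
p_n = \exp(o_n)\sum_{q} p_q \frac{\xi_{q,n}}{\sum_k \xi_{q,k}\exp(o_k)}.
\end{align}
Because all $\xi_{q,n}>0$ (Lemma~\ref{lem:four}) and, by the preceding lemmas, lie in $(0,1]$, every coefficient multiplying $\exp(o_n)$ on the right is strictly positive; this is the monotone link between posterior mass and activation that I would exploit to transport the Bayes argmax onto $\mathbf{o}^*$. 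As a sanity check, in the cost-insensitive case $\xi=\mathbf{1}$ (Lemma~\ref{lem:three}) the normalizer becomes independent of $q$, the system collapses to $y_n=p_n$, and the argmax is preserved exactly.

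The main obstacle is precisely the row-dependence of $\xi$: the normalizer $Z_q=\sum_k \xi_{q,k}\exp(o_k)$ varies with the true class $q$, so the optimality equations do \emph{not} decouple into the clean identity $y_n=p_n$ available for the ordinary soft-max loss. I would handle this along two complementary lines. The cleaner route is a pairwise reduction: restricting to any two classes while holding the other activations fixed reduces the condition to a binary margin loss, for which Bartlett et al.'s characterization (convex, differentiable at the origin, negative derivative there) applies and certifies the correct pairwise ordering. The more self-contained route is a monotonicity argument on the system above, using strict positivity of $\xi$ together with the guess-aversion already established in Lemma~\ref{lem:seven}, to show that $p_m > p_n$ forces $o_m^* > o_n^*$, whence $\operatorname{argmax}_n o_n^* = \operatorname{argmax}_q p_q$ and the loss is c-calibrated. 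I expect the delicate step to be controlling the sign in the coupled stationarity relations, i.e.\ ruling out an order reversal induced by the class-dependent $Z_q$, and I expect the bounds $\xi_{q,n}\in(0,1]$ from the earlier lemmas to be exactly what makes that estimate go through.
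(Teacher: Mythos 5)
Your proposal follows essentially the same route as the paper: both write the conditional risk
$\mathcal{R}_{\ell}[\mathbf{o}] = -\sum_{p} P(p|\mathbf{x})\,\log\bigl(\xi_{p,p}\exp(o_p)/\sum_{k}\xi_{p,k}\exp(o_k)\bigr)$,
differentiate in $o_t$, and arrive at exactly your stationarity equation
$P(t|\mathbf{x}) = \exp(o_t)\sum_{p} P(p|\mathbf{x})\,\xi_{p,t}\big/\bigl(\sum_{k}\xi_{p,k}\exp(o_k)\bigr)$.
The divergence is in what happens next. The paper ``solves'' for $o_t$ by writing
$o_t = \log P(t|\mathbf{x}) - \log\bigl(\sum_{p}P(p|\mathbf{x})\xi_{p,t}\bigr) + \log\bigl(\sum_{p}\sum_{k}\xi_{p,k}\exp(o_k)\bigr)$,
i.e.\ it splits the logarithm of a sum of fractions as if the denominators $\sum_{k}\xi_{p,k}\exp(o_k)$ did not depend on $p$, and then reads off an ``inverse relationship'' between the optimal output and the Bayes cost to assert calibration. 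That algebraic step is exactly the coupling obstacle you flag: the class-dependent normalizers $Z_q$ prevent the system from decoupling, and an order-preservation argument is needed to rule out a reversal of the argmax. So your diagnosis of where the difficulty lies is sharper than the paper's treatment of it; the pairwise reduction to Bartlett et al.'s binary criterion, or the monotonicity argument you outline, would supply rigor that the paper's own proof does not contain. As written, though, your proposal is a plan rather than a proof --- the two completion routes are described but not executed, and the monotonicity claim ($p_m > p_n$ forces $o_m^* > o_n^*$ for arbitrary positive row-dependent $\xi$) is precisely the nontrivial assertion that still needs to be established; note also that for general $\xi$ the argmax of the cost-weighted softmax $y_n$ need not coincide with the argmax of $o_n^*$, so you should be explicit about which decision rule you are calibrating.
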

\begin{proof}
{Given an input sample $x^{}$ which} belongs to class $p^*$ (i.e., $d_{p^*}^{} = 1$), then the CE loss can be expressed as:
\begin{align*}
\ell(\xi, \mathbf{d}^{}, \mathbf{o}^{}) = - \log \left( \frac{\xi_{p^*,p^*}\exp({{o}_{p^*}^{}})}{\sum\limits_{k} \xi_{p,k} \exp({{o}_{k}^{}})} \right)
\end{align*}
The classification risk can be expressed in terms of {the} expected value as follows:
\begin{align*}
\mathcal{R}_{\ell}[\mathbf{o}] & = \mathbb{E}_{X,D}[\ell(\xi, \mathbf{d}^{}, \mathbf{o}^{})] \\
& = \sum\limits_{p=1}^{N} P(p|\mathbf{x}^{})\ell(\xi, \mathbf{d}^{}, \mathbf{o}^{})  =  \sum\limits_{p=1}^{N} P(p|\mathbf{x}^{})\ell(\xi, p, \mathbf{o}^{}) \\
& = - \sum\limits_{p=1}^{N}  P(p|\mathbf{x}^{}) \log \left( \frac{\xi_{p,p}\exp({{o}_{p}^{}})}{\sum\limits_{k} \xi_{p,k} \exp({{o}_{k}^{}})} \right)
\end{align*}
Next, we compute the derivative and set it to zero to find the ideal set of CNN outputs `$\mathbf{o}$',
\begin{align*}
& \frac{\partial \mathcal{R}_{\ell}[\mathbf{o}]}{\partial \, {o}_t} 
 = \left. \frac{\partial \mathcal{R}_{\ell}[{o}_p]}{\partial \, {o}_t} \right\rvert_{p\neq t}^{} + \left. \frac{\partial \mathcal{R}_{\ell}[{o}_p]}{\partial \, {o}_t} \right\rvert_{p=t}^{} = 0 \\
& \left. \frac{\partial \mathcal{R}_{\ell}[{o}_p]}{\partial \, {o}_t} \right\rvert_{p= t}^{}  = \frac{\partial}{\partial\,o_t} \left(-\sum\limits_{p=1}^{N} P(p|\mathbf{x}^{}) \log(\xi_{p,p}\exp({{o}_{p}^{}})) \right. \\
& \left. + \sum\limits_{p=1}^{N} P(p|\mathbf{x}^{}) \log(\sum\limits_{k} \xi_{p,k} \exp({{o}_{k}^{}})) \right) \\
& = - P(t|\mathbf{x}^{})  + P(t|\mathbf{x}^{}) \frac{\xi_{t,t} \exp(o_t^{})}{\sum\limits_{k=1}^{N} \xi_{t,k}\exp(o_k^{})}
\end{align*} 
Similarly, 
\begin{align*}
& \left. \frac{\partial \mathcal{R}_{\ell}[{o}_p]}{\partial \, {o}_t} \right\rvert_{p\neq t}^{}  = \sum\limits_{p\neq t}^{}  P(p|\mathbf{x}^{}) \frac{\xi_{p,t} \exp(o_t^{})}{\sum\limits_{k=1}^{N} \xi_{p,k}\exp(o_k^{})} 
\end{align*}

By adding the above two derived expression and setting them to zero, we have :
\begin{align*}
&P(t|\mathbf{x}^{})  = \exp(o_t^{}) \sum\limits_{p=1}^{N}  \frac{P(p|\mathbf{x}^{}) \xi_{p,t} }{\sum\limits_{k=1}^{N} \xi_{p,k}\exp(o_k^{})} \\
& o_t^{} = \log(P(t|\mathbf{x}^{})) - \log \left({ \sum\limits_{p=1}^{N} P(p|\mathbf{x}^{}) \xi_{p,t} }\right) \\
& + \log\left( { \sum\limits_{p=1}^{N}\sum\limits_{k=1}^{N} \xi_{p,k}\exp(o_k^{})} \right)
\end{align*} 
Which shows that there exists an inverse relationship between the optimal CNN output and the Bayes cost of the $t^{th}$ class, and 
{hence,} the cost-sensitive CE loss is classification calibrated. 
\end{proof}
\vspace{0.13cm}

{Under the properties of Lemmas~\ref{lem:seven}--\ref{lem:nine}, the modified loss functions are therefore suitable for classification. Having established the \rev{class-dependent} costs (Sec.~\ref{subsec:propCM}) and their impact on the loss layer (Sec.~\ref{sec:CSL}), we next describe the training algorithm to automatically learn all the parameters of our model ($\theta$ and $\xi$).}

\subsection{{Optimal Parameters Learning}}\label{sec:learning}
When using any of the {previously mentioned} loss functions (Eqs.~(\ref{eq:MSE_loss}-\ref{eq:softmaxloss})), our goal is to jointly learn the hypothesis parameters $\theta$ and the \rev{class-dependent} loss function parameters $\xi$.
For the joint optimization, we alternatively solve for both types of parameters by keeping one fixed and minimizing the cost with respect to the other (Algorithm~\ref{alg:alt_opt}). 
Specifically, for the optimization of $\theta$, we use the stochastic gradient descent (SGD) with the back-propagation of error (Eq.~(\ref{eq:error_fun})). 
{Next, to optimize 
$\xi$}, we again use the gradient descent algorithm to calculate the direction of the step to update the parameters.
{The cost function is also dependent on the class-to-class separability, the current classification errors made by the network with current estimate of parameters and the overall classification error.
The class-to-class (c2c) separability is measured by estimating the spread of the with-in class samples (intraclass) compared to {the} between-class (interclass) ones.
In other words, it measures the relationship between the with-in class sample distances and the size of the separating boundary between {the} different classes. \rev{Note that the proposed cost function can be easily extended to include an externally defined cost matrix for applications where expert opinion is necessary. However, this paper mainly deals with class-imbalance in image classification datasets where externally specified costs are not required. }

\begin{algorithm}[t]
\caption{Iterative optimization for parameters $(\theta, {\xi})$ }
\begin{algorithmic}[1]
\Require Training set ($\mathbf{x}$, $\mathbf{d}$), Validation set ($\mathbf{x}_V$, $\mathbf{d}_V$), \rev{Max. epochs (${M_{ep}}$)}, Learning rate for $\theta$ ($\gamma_{\theta}$), Learning rate for $\xi$ ($\gamma_{\xi}$)
\Ensure Learned parameters ($\theta^{*}$, $\xi^{*}$)
\State Net $\leftarrow$ construct\_CNN$()$
\State $\theta \leftarrow$ initialize\_Net$(\text{Net})$ \Comment{Random initialization}
\State $\mathbf{\xi} \leftarrow \mathbf{1}, \text{val-err} \leftarrow 1$
\For {\rev{$e \in [1,M_{ep}]$}} \Comment{Number of epochs}
\State grad$_\xi \leftarrow $ \rev{compute-grad}$(\mathbf{x}, \mathbf{d}, F(\xi))$ \Comment{Eq.~(\ref{eq:gard_F})}
\State $\xi^{*} \leftarrow \text{update-CostParams}({\xi},  \gamma_{\xi}, \text{grad}_{\xi})$
\State $\xi \leftarrow \xi^{*}$
\For {$b \in [1,B]$} \Comment{Number of batches}
\State ${\text{out}}_b \leftarrow \text{forward-pass}(\mathbf{x}_b, \mathbf{d}_b, \text{Net}, \theta)$
\State grad$_b \leftarrow$ backward-pass$(\text{out}_b, \mathbf{x}_b, \mathbf{d}_b, \text{Net}, \theta, \xi)$
\State $\theta^{*} \leftarrow$ update-NetParams$(\text{Net}, \theta, \gamma_\theta, \text{grad}_b)$
\State $\theta \leftarrow \theta^{*}$
\EndFor
\State val-err$^{*} \leftarrow \text{forward-pass}(\mathbf{x}_V, \mathbf{d}_V, \text{Net}, \theta)$ 
\If {val-err$^{*} >$ val-err}
\State $\gamma_\xi \leftarrow \gamma_\xi * 0.01$ \Comment{Decrease step size}
\State val-err $\leftarrow$ val-err$^{*}$
\EndIf
\EndFor
\State \Return ($\theta^{*}$, $\xi^{*}$)
\end{algorithmic}
\label{alg:alt_opt}
\end{algorithm}

To calculate the c2c separability, we first  compute a suitable distance measure between each point in a class $c_p$ and its nearest neighbour belonging to $c_p$ and the nearest neighbour in class $c_q$. 
Note that these distances are calculated in the feature space where each point is a $4096$ dimensional feature vector (\rev{$f_i: i\in [1,N']$, $N'$ bieng the samples belonging to class $c_p$}) obtained from the {penultimate } CNN layer (just before the output layer). 
Next, we find the average of intraclass distances to interclass distance for each point in a class and compute the ratio of the averages to find the c2c separability index. 
Formally, the class separability between two classes, $p$ and $q$ is defined as:
$$
S(p,q) = \frac{1}{N'}\sum\limits_{i} \frac{dist_{intraNN}(f_i)}{dist_{interNN}(f_i)}
$$ 
To avoid over-fitting and to keep this step computationally feasible, we measure {the} c2c separability  on a small validation set.
Also, the c2c separability was found to correlate well with the confusion matrix at each epoch.
Therefore the measure was calculated after every 10 epochs to minimize the computational  overhead. 
{Note that by simply setting the parameters ($\xi$) based on the percentages  of the classes in the data distribution results in {a} poor performance (Sec.~\ref{subsec:results}). This suggests that the  optimal parameter values for \rev{class-dependent} costs ($\xi^*$) should not be the same as the frequency of the classes in the training data distribution}.
}
The following cost function is used for the gradient computation to update $\xi$:
\begin{align}\label{eq:2ndcost_fun}
F(\xi) =  \parallel T - \xi \parallel_2^2 + E_{val}(\theta,\xi),  
\end{align}
\rev{where $E_{val}$ is the validation error.} The matrix $T$ is defined as follows: 
\begin{align} \label{eq:Texp}
T = H \circ \exp(-\frac{(S-\mu_1)^2}{2\sigma_1^2}) \circ \exp(-\frac{(R-\mu_2)^2}{2\sigma_2^2}) ,
\end{align}
{where, $\mu, \sigma$ denote the parameters which are set using cross validation, \rev{$R$} denotes the current classification errors as a confusion matrix, $S$ denotes the class c2c separability matrix and $H$ is a matrix defined using the histogram vector $\mathbf{h}$ which encodes the distribution of classes in the training set.} The matrix $H$ and vector $\mathbf{h}$ are linked as follows:
\begin{align}
   H(p,q) = \left\{
     \begin{array}{ll}
       \max(h_p, h_q) & :  p\neq q, (p,q) \in \mathbf{c},\\
       h_p                    & : p = q, p \in \mathbf{c} \\
     \end{array}
   \right.
\end{align}
where, $\mathbf{c}$ is the set of all classes in a given dataset. 
The resulting minimization objective to find the optimal $\xi^{*}$  can be expressed as:
\begin{align}\label{eq:cost_fun2}
\xi^{*} = \underset{\xi}{\arg\min} \; F(\xi) .
\end{align}


\begin{figure*}[!htp]
\centering
\includegraphics[width=0.4\textwidth]{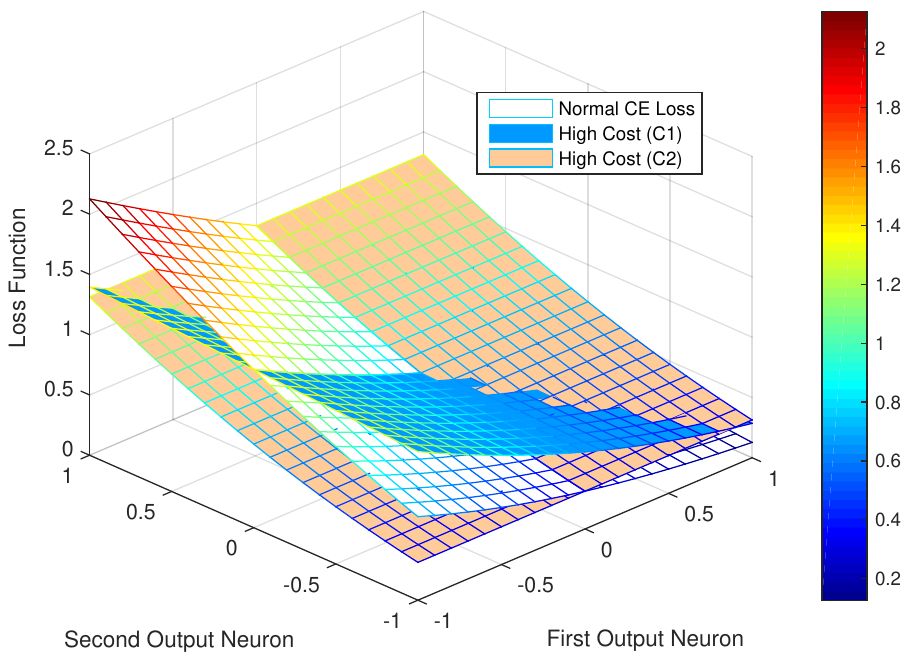}
\qquad
\includegraphics[width=0.4\textwidth]{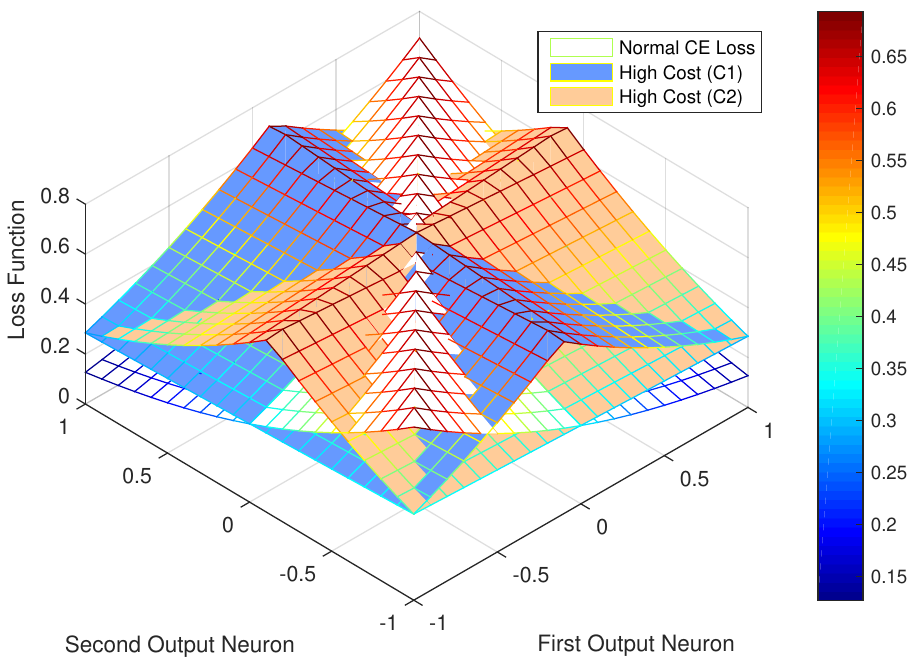}
\vspace{.4cm}
\caption{The CE loss function for the case of binary classification. 
\emph{Left:} loss surface for a single class for different costs (high cost for first class (C1), no cost, high cost for second class (C2)). {\emph{Right:} the minimum loss values for all  possible values of class scores illustrate the obvious  classification boundaries.} The score-level costs reshape the loss surface and the classification boundaries are effectively shifted in favour of the  classes with relatively lower cost. }
\label{fig:loss_func}\vspace{-0.2cm}
\end{figure*}

\noindent
In order to optimize the cost function in Eq.~(\ref{eq:cost_fun2}), we use the gradient descent algorithm which computes the direction of {the} update step, as follows:
\begin{align}\label{eq:gard_F}
 \nabla F(\xi) & =  \nabla  (\mathbf{v}_a - \mathbf{v}_b)(\mathbf{v}_a - \mathbf{v}_b)^T  \notag \\
 &  = (\mathbf{v}_a - \mathbf{v}_b)J_{\mathbf{v}_b}^{T} = -(\mathbf{v}_a - \mathbf{v}_b)\mathbf{1}^{T}.
\end{align}
where, $\mathbf{v}_a = vec(T)$, $\mathbf{v}_b = vec(\xi)$ and  $J$ denotes the \rev{Jacobian matrix}.
Note that {in order} to incorporate the dependence of $F(\xi)$ on the validation error $E_{val}$, we take the update step only if it results in 
{a} decrease 
{in} $E_{val}$ (see Algorithm~\ref{alg:alt_opt}). 

{Since, our approach involves {the use of} modified loss functions during the {CNN} parameter learning process, we will discuss their effect on the back-propagation algorithm in the next section. }

\subsection{{Effect on} Error Back-propagation}\label{sec:BPeff}
{In this section, }we discuss the impact of the modified loss functions on the gradient computation {of}
the back-propagation algorithm. 
\subsubsection{\rev{Cost-Sensitive} MSE}\label{subsec:csmseloss}
During the supervised training, the MSE loss minimizes the mean squared error between the predicted weighted outputs of the model \rev{$\mathbf{y}$}, and the ground-truth labels \rev{$\mathbf{d}$}, across the entire training set  (Eq.~(\ref{eq:MSE_loss})).
The modification of the loss function changes the gradient computed during the back-propagation algorithm. 
Therefore, for the output layer, the mathematical expression of {the} gradient at each neuron is given by:
\begin{align*}
\frac{\partial \ell(\mathbf{d}^{}, \mathbf{y}^{})}{\partial {o}_{n}^{}}  & = - ({d}_{n}^{} - y_n^{}) \frac{\partial y_n^{}}{\partial {o}_{n}^{}} 
\end{align*}
The $y_n^{}$ for the \rev{cost-sensitive} MSE loss can be defined as:
\begin{align*}
y_n^{} = \frac{1}{1 + \exp( - {o}_n^{} \xi_{p,n})}
\end{align*}
The partial derivative can be calculated as follows:
\begin{align*}
\frac{\partial y_n^{}}{\partial {o}_{n}^{}} & = \frac{\xi_{p,n} \exp( - o_n^{} \xi_{p,n})}
{\left(1 + \exp ( - o_n^{} \xi_{p,n})\right)^2}
\\
& = \frac{\xi_{p,n}}{\left(1 + \exp(o_n^{} \xi_{p,n})\right) \left(1 +  \exp(- o_n^{} \xi_{p,n})\right)}
\\
\frac{\partial y_n^{}}{\partial {o}_{n}^{}} & = \xi_{p,n} y_n^{} (1 - y_n^{})
\end{align*}
The derivative of the loss function is therefore given by:
\begin{align}
\frac{\partial \ell(d^{}, {y}^{})}{\partial {o}_{n}^{}}  = - \xi_{p,n}(d_n^{} - y_n^{}) y_n^{} (1 - y_n^{}).
\end{align}

\subsubsection{\rev{Cost-Sensitive} SVM Hinge Loss}\label{subsec:cssvmloss}
For the SVM hinge loss function given in Eq.~(\ref{eq:svm_loss}), the directional derivative can be computed at each neuron as follows:
\begin{align*}
\frac{\partial\ell(\mathbf{d}^{}, {y}_{}^{})}{\partial o_n^{}} = - (2d_n^{} - 1)\frac{\partial y_n^{}}{\partial o_n^{}} \mathbb{I}\{1> y_n^{} (2d_n^{} - 1)\} . 
\end{align*} 
The partial derivative of the output of the softmax w.r.t the output of the penultimate layer is given by:
${\partial y_n^{}}/{\partial o_n^{}} = \xi_{p,n}$.
By combining the above two expressions, the derivative of the loss function can be represented as:
\begin{align}
\frac{\partial\ell(\mathbf{d}^{}, {y}_{}^{})}{\partial o_n^{}} = - (2d_n^{} - 1)\xi_{p,n} \mathbb{I}\{1> y_n^{} (2d_n^{} - 1)\} .
\end{align} 
where, $\mathbb{I}(\cdot)$ denotes an indicator function.

\subsubsection{\rev{Cost-Sensitive} CE loss}\label{subsec:csceloss}
The \rev{cost-sensitive} softmax log loss function is defined in Eq.~(\ref{eq:softmaxloss}).
Next, we show that the introduction of a cost in the CE loss does not change the gradient formulas and the cost is rather incorporated implicitly in the softmax output $y_m^{}$. The effect of costs on the CE loss surface is illustrated in Fig.~\ref{fig:loss_func}.

\begin{prop}
The introduction of a class imbalance cost $\xi_{(\cdot)}$ in the \rev{softmax} loss ($\ell(\cdot)$ in Eq. \ref{eq:softmaxloss}), does not affect the computation of the gradient during the back-propagation process.
\end{prop}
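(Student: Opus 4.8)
The plan is to substitute the cost-sensitive softmax output (Eq.~(\ref{eq:softmax})) directly into the log loss (Eq.~(\ref{eq:softmaxloss})) and expand the logarithm so that the cost-dependent terms are separated from those involving the activations $o_n$. First I would write
$$
\ell(\mathbf{d},\mathbf{y}) = -\sum_n d_n \log\xi_{p,n} - \sum_n d_n o_n + \sum_n d_n \log\Big(\sum_k \xi_{p,k}\exp(o_k)\Big).
$$
Because $\mathbf{d}$ is one-hot with $\sum_n d_n = 1$, the third summation collapses to a single log-partition term $\log\big(\sum_k \xi_{p,k}\exp(o_k)\big)$ that is common to every output neuron.

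Next I would differentiate with respect to a single penultimate activation $o_t$. The essential observation is that the first term $-\sum_n d_n\log\xi_{p,n}$ is constant in $o_t$ and hence vanishes under the derivative; this is precisely where the explicit dependence on the cost matrix is eliminated. The second term contributes $-d_t$, and the log-partition term contributes
$$
\frac{\xi_{p,t}\exp(o_t)}{\sum_k \xi_{p,k}\exp(o_k)} = y_t ,
$$
which is exactly the perturbed softmax output. Adding these gives $\partial\ell/\partial o_t = y_t - d_t$, identical in form to the standard cost-insensitive softmax cross-entropy gradient, with $\xi$ surviving only implicitly inside $y_t$.

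The step requiring the most care, though conceptually straightforward, is confirming that the cost enters the gradient through $y_t$ alone and leaves no residual term: one must verify both that the additive constant $\log\xi_{p,n}$ is genuinely independent of the activations and that differentiating the log-partition function reproduces exactly the cost-weighted softmax rather than an extra factor of $\xi$. Once this bookkeeping is settled, the back-propagation formula at the loss layer is unchanged from the ordinary case, which establishes the proposition.
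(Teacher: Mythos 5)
Your proof is correct, but it follows a different decomposition than the paper's. You expand $\log y_n$ into three pieces --- the cost-only constant $-\sum_n d_n\log\xi_{p,n}$, the linear term $-\sum_n d_n o_n$, and the log-partition term $\log\bigl(\sum_k\xi_{p,k}\exp(o_k)\bigr)$ (which collapses because $\sum_n d_n=1$) --- and differentiate each directly, so the cost visibly drops out as an additive constant and survives only inside the cost-weighted softmax $y_t$. The paper instead first computes the Jacobian of the perturbed softmax by the quotient rule, obtaining $\partial y_n/\partial o_m = y_m(1-y_m)$ for $m=n$ and $-y_m y_n$ for $m\neq n$, and then chains through $-\sum_n d_n\, y_n^{-1}\,\partial y_n/\partial o_m$ to reach the same conclusion $\partial\ell/\partial o_m = y_m - d_m$. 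Your route is shorter and makes the mechanism of cancellation more transparent; the paper's route yields the additional intermediate fact that the softmax Jacobian itself keeps its cost-insensitive functional form in terms of $y$, which is the observation one would need to reuse $y$ with a different loss on top. Both are valid; your bookkeeping (the constancy of $\log\xi_{p,n}$ in $o_t$, which holds because $p=\operatorname{argmax}_m d_m$ is fixed by the label, and the derivative of the log-partition reproducing exactly $y_t$) checks out.
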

\begin{proof}
We start with the calculation of the partial derivative of the softmax neuron with respect to its input:
\begin{align} 
\frac{\partial y_n^{}}{\partial {o}_{m}^{}} = \frac{\partial}{\partial {o}_{m}^{}} \left( \frac{\xi_{p,n}\exp({{o}_{n}^{}})}{\sum\limits_{k} \xi_{p,k} \exp({{o}_{k}^{}})} \right)
\end{align}
Now, two cases can arise here, either $m=n$ or $m\neq n$. We first solve for the case when $n = m$:
\begingroup\makeatletter\def\f@size{9}\check@mathfonts
$$ LHS = \xi_{p,m} \left( \frac{\exp({o}_{m}^{}) \sum\limits_{k} \xi_{p,k} \exp({{o}_{k}^{}}) - \xi_{p,m} \exp(2{o}_{m}^{}) }{ 
\left(\sum\limits_{k} \xi_{p,k} \exp({{o}_{k}^{}})\right)^2
}\right).$$\endgroup
After simplification we get:
$$ \frac{\partial y_n^{}}{\partial {o}_{m}^{}} = y_m^{} (1 - y_m^{}),  \qquad s.t.: m = n$$
Next, we solve for the case when $n \neq m$:
$$
LHS = - \frac{\xi_{p,n}\xi_{p,n} \exp({o}_{m}^{} + {o}_{n}^{})}
{
\left(\sum\limits_{k} \xi_{p,k} \exp({{o}_{k}^{}})\right)^2
} = -y_m^{} y_n^{}, s.t.: m \neq n.
$$
The loss function can be differentiated as follows:
\begin{align*}
& \frac{\partial \ell(\mathbf{y}^{}, \mathbf{d}^{})}{\partial {o}_{m}^{}}   =  - \sum\limits_{n} {d}^{}_{n} \frac{1}{y_n^{}} \frac{\partial y^{}_{n}}{\partial {o}_{m}^{}}  ,\\
 & = - {d}^{}_{m}(1-y^{}_m) + \sum\limits_{n \neq m}{d}^{}_{n} y^{}_m  =  - {d}^{}_{m} + \sum\limits_{n} {d}^{}_{n} y^{}_m.
\end{align*}
Since, ${d}^{}$ is defined as a probability distribution over all output classes ($ \sum\limits_n {d}^{}_{n} = 1$), therefore:
$$ 
\frac{\partial \ell(\mathbf{y}^{}, \mathbf{d}^{})}{\partial {o}_{m}^{}}  =  - {d}^{}_{m} + y_m^{}.
$$
This result is the same as {in} the case when CE does not contain any \rev{cost-sensitive} parameters. 
Therefore the costs affect the softmax output $y_m^{}$ but the gradient formulas remain unchanged. 
\end{proof}

In our experiments {(Sec.~\ref{sec:exp})}, { we will only report 
performances} with the \rev{cost-sensitive} CE loss  function. 
This is because, {it has been shown that the CE loss 
outperforms} the other two loss functions in most cases \cite{lee2014deeply}.
Moreover, it avoids the learning slowing down problem of the MSE loss \cite{nielsen2014neural}.

\section{Experiments and Results}\label{sec:exp}
The class imbalance problem is present in nearly all real-world object and image  datasets.
This is not because of any flawed data collection, but it is simply due to the natural frequency patterns of different object classes in real life. 
For example, 
 a \emph{bed} 
{appears} in nearly every bedroom scene, but a \emph{baby cot} 
{appears much} less frequently.
{{Consequently, from the 
perspective} of class imbalance, {the} currently available image classification datasets can be divided into three categories:
 \begin{enumerate}
 \item Datasets with {a} significant class imbalance {both in the training 
 and the testing split} (e.g., DIL, MLC),
\item Datasets 
{with unbalanced class distributions 
but with 
experimental protocols that are designed 
in a way that 
an equal number of images from all classes are used during the training process}
(e.g., MIT-67, Caltech-101). The testing images can be equal or unequal for different classes.
 \item Datasets with {an} equal representation of each class in the training and testing splits (e.g., MNIST, CIFAR-100).
 \end{enumerate}
We perform extensive experiments on six challenging image classification datasets (two {from each} 
category) 
(see Sec.~\ref{subsec:datasets}).
For the {case of} imbalanced datasets ($1^{st}$ category), we report results on the standard splits for two experiments. 
{For the two
datasets 
from the} $2^{nd}$ category, we report our performances on the standard splits, deliberately deformed splits and the original data distributions.
For the {two datasets from
the} $3^{rd}$ category, we report results on the standard splits and {on} deliberately imbalanced splits. 
Since, our training procedure requires a small validation set (Algorithm~\ref{alg:alt_opt}), we use $\sim 5\%$ of the training data in each experiment as a held-out validation set.}


\rev{\subsection{Multi-class Performance Metric}\label{subsec:perfmetric}
The main goal of this work is to enhance the overall classification accuracy without compromising the  precision of minority and majority classes. 
Therefore, we report overall classification accuracy results in Tables~\ref{tab:DILexp}-\ref{tab:MIT67exp}, \ref{tab:comp_imbalanced} and \ref{tab:comp_fixedcost} for comparisons with baseline and state-of-the art balanced and unbalanced data classification approaches. We report class recall rates in confusion matrices displayed in Fig.~\ref{fig:CM}. We also show our results in terms of G-mean and F-measure scores on all the six datasets (see Table~\ref{tab:gnf}). Note that the F-measure and G-mean scores are primarily used for binary classification tasks. Here, we extend them to multi-class problem using the approach in \cite{espindola2005extending}, where these scores are calculated for each class in a one-vs-all setting and their weighted average is calculated using the class frequencies.}

\rev{It is also important to note that neural networks give a single classification score and it is therefore not feasible to obtain ROC curves. As a result, we have not included AUC measurements in our experimental results.}

\begin{figure*}[htp]
\includegraphics[width=1\textwidth]{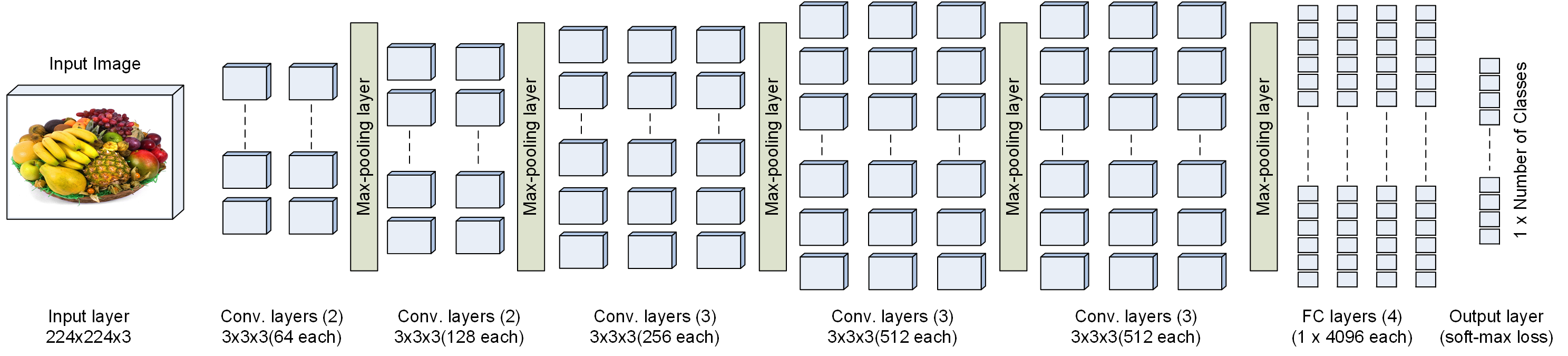}
\caption{The CNN architecture used in this work
{consists} of 18 weight layers. }\vspace{-0.2cm}
\label{fig:cnn_arch}
\end{figure*}

\begin{figure*}[t]
\centering
\begin{subfigure}[t]{0.24\textwidth}
\includegraphics[width=\textwidth]{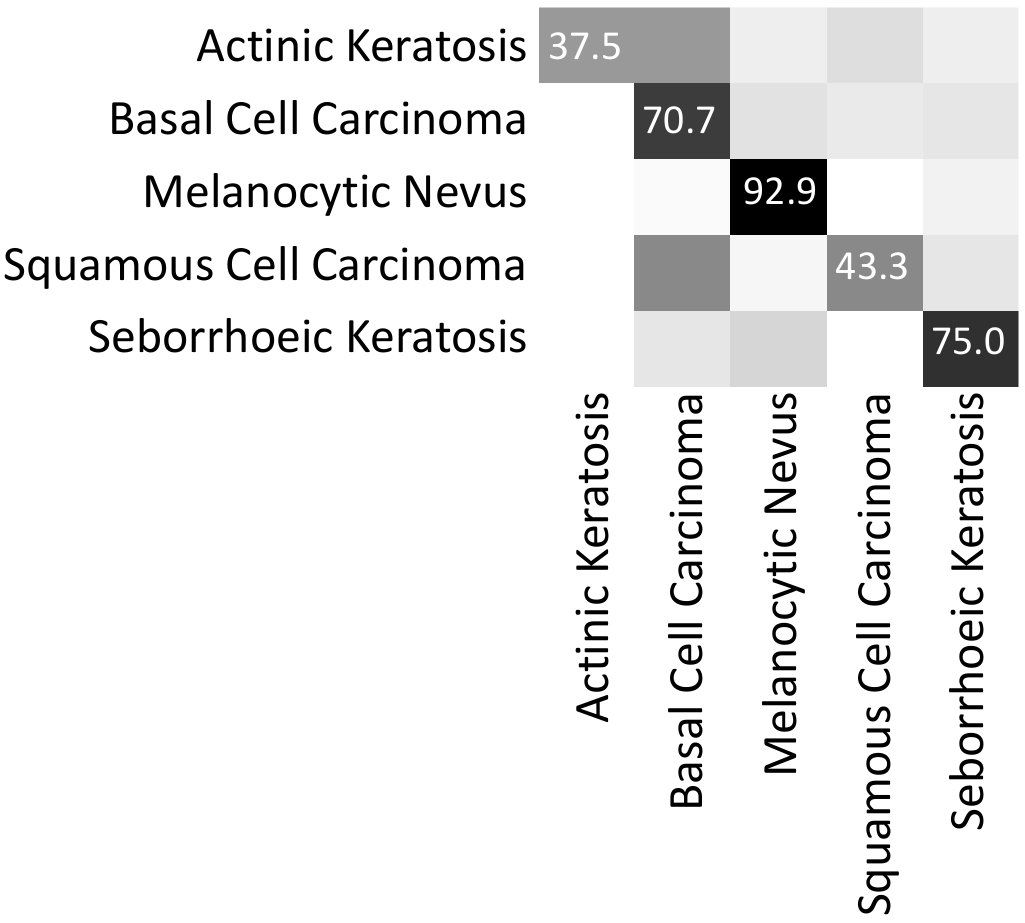}
\caption{DIL (Baseline-CNN)}
\label{fig:DIL_CM_woC}
\end{subfigure}
\hfill
\begin{subfigure}[t]{0.24\textwidth}
\includegraphics[width=\textwidth]{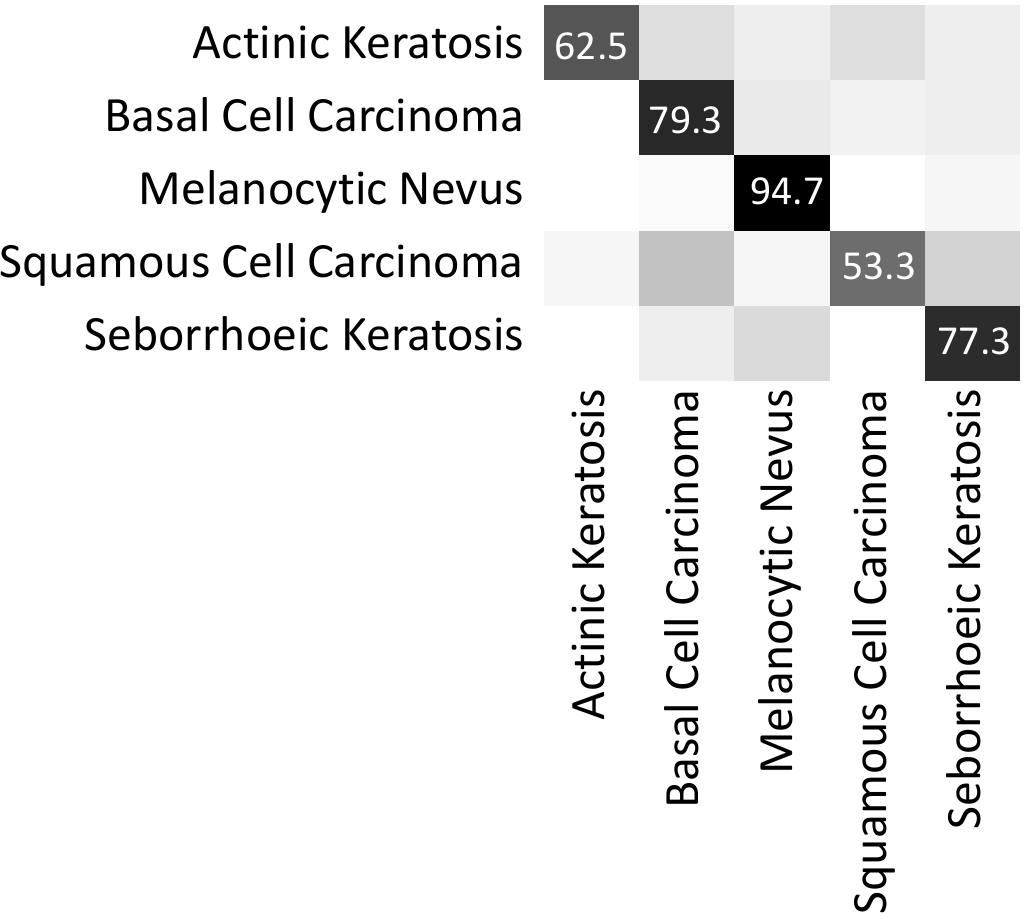}
\caption{DIL (CoSen-CNN)}
\label{fig:DIL_CM_wC}
\end{subfigure}
\hfill
\begin{subfigure}[t]{0.24\textwidth}
\includegraphics[width=\textwidth]{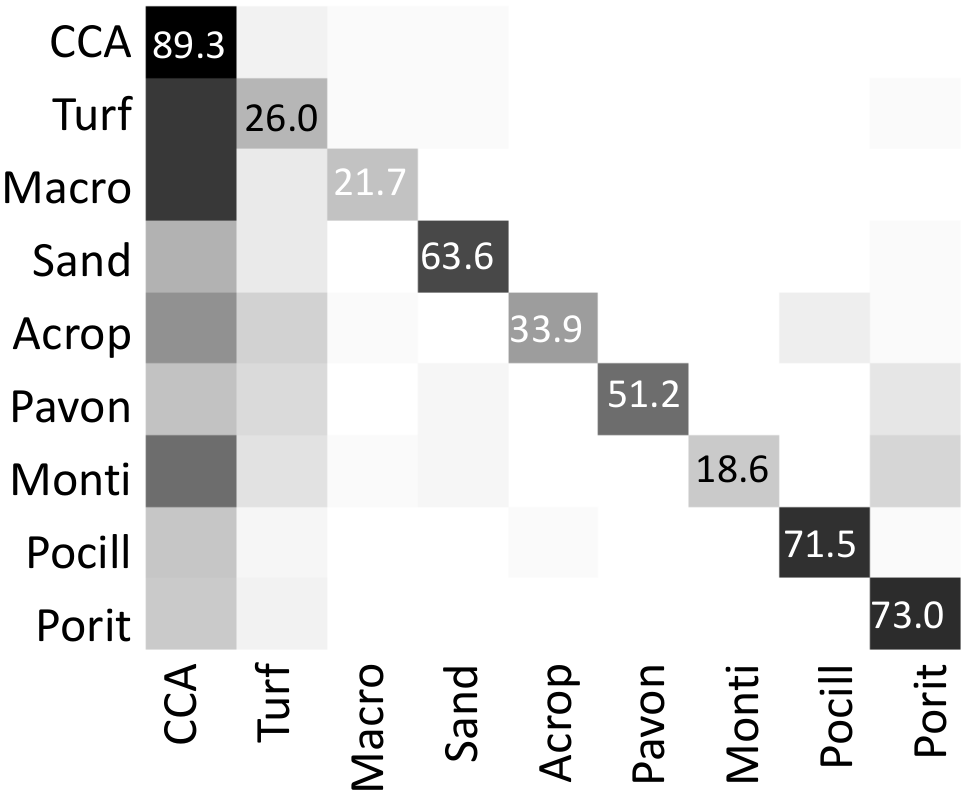}
\caption{MLC (Baseline-CNN)}
\label{fig:MLC_CM_woC}
\end{subfigure}
\hfill
\begin{subfigure}[t]{0.24\textwidth}
\includegraphics[width=\textwidth]{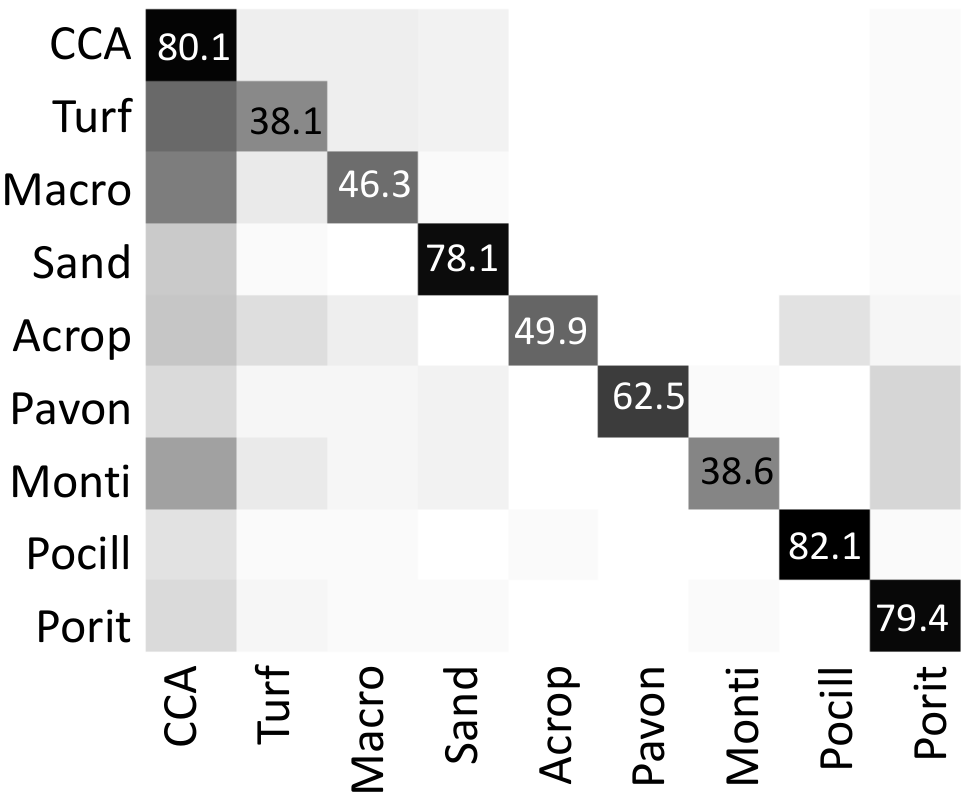}
\caption{MLC (CoSen-CNN)}
\label{fig:MLC_CM_wC}
\end{subfigure}
\vspace{3mm}
\caption{ Confusion Matrices for the 
Baseline and CoSen CNNs on the DIL and MLC datasets. Results are reported for Experiments 1 and 2 for the DIL and MLC datasets, respectively.  }
\label{fig:CM}\vspace{-0.2cm}
\end{figure*}

\subsection{Datasets and Experimental Settings}\label{subsec:datasets}
{\subsubsection{Imbalanced Datasets}
\noindent
\textbf{Melanoma Detection : Edinburgh Dermofit Image Library (DIL)}
consists of 1300 high quality skin lesion images based on diagnosis from dermatologists and dermatopathologists.
There are 10 types of lesions identified in this dataset including melanomas, seborrhoeic keratosis and basal cell carcinomas.
The number of images in each category varies between 24 and 331 (mean 130, median 83).
Similar to \cite{ballerini2013color}, we report results with 3-fold cross validation. 

\vspace{4pt}
\noindent
\textbf{Coral Classification : Moorea Labelled Corals (MLC) } contains 2055 images from three coral reef habitats during 2008-10. Each image is annotated with roughly 200 points belonging to the 9 classes (4 non-corals, 5 corals). Therefore in total, there are nearly 400,000 labelled points.
The class representation varies approximately from 2622 to 196910 (mean 44387, median 30817). 
We perform two of the major standard experiments on this dataset similar to \cite{beijbom2012automated}. 
The first experiment involves training and testing on data from year 2008.
In the second experiment, training is carried out on data from year 2008 and testing on data from year 2009.
}

\subsubsection{Imbalanced Datasets-Balanced Protocols}
\noindent
\textbf{Object Classification: Caltech-101}
contains a total of 9,144 images, divided into 102 categories (101 objects + background). 
The number of images for each category varies between 31 and 800 images (mean: 90, median 59).
{ The dataset is originally imbalanced but the standard protocol which is balanced uses 30 or 15 images for each category during training, and testing is performed on the remaining images (max. 50).}
We perform experiments using the standard $60\%/40\%$ and $30\%/70\%$ train/test splits.

\vspace{4pt}
\noindent
\textbf{Scene Classification: MIT-67} consists of 15,620 images belonging to 67 classes. 
The number of images varies between 101 and 738 (mean: 233, median: 157).
The standard protocol uses a subset of 6700 images (100 per class) for training and evaluation to make the distribution uniform. 
We will, however, evaluate our approach both on the standard split (80 images for training, 20 for testing) and the complete dataset with imbalanced train/test splits of $60\%/40\%$ and $30\%/70\%$.

\subsubsection{Balanced Datasets-Balanced Protocols}
\noindent
\textbf{Handwritten Digit Classification: MNIST} consists of 70,000 images of digits (0-9). 
Out of the total, 60,000 images are used for training ($\sim$600/class) and the remaining 10,000 for testing ($\sim$100/class). 
We evaluate our approach on the standard split as well as the deliberately imbalanced splits. 
To imbalance the training distribution, we reduce the representation of even and odd digit classes to only $25\%$ and $10\%$ of images, respectively.

\vspace{4pt}
\noindent
\textbf{Image Classification: CIFAR-100}
contains 60,000 images belonging to 100 classes (600 images/class). 
The standard train/test split for each class is 500/100 images.
We evaluate our approach on the standard split as well as on artificially imbalanced splits. 
To imbalance the training distribution, we reduce the representation of even-numbered and odd-numbered classes to only $25\%$ and $10\%$ of images, respectively.

\begin{table}
\centering
\scalebox{1}{
\begin{tabular}{@{}lcc@{}}
  \toprule[0.4mm]
  \textbf{Methods}  &  \multicolumn{2}{c}{\textbf{Performances}}  \\
  (using stand. split) & Exp. 1 (5-classes) & Exp. 2 (10-classes) \\
  \midrule
  Hierarchical-KNN \cite{ballerini2013color} &{74.3 $\pm$ 2.5\%}  & 68.8 $\pm$ 2.0\%\\
  Hierarchical-Bayes \cite{ballerini2012non} & {69.6 $\pm$ 0.4\%} & 63.1 $\pm$ 0.6\% \\
  Flat-KNN \cite{ballerini2013color} & {69.8 $\pm$ 1.6\%} & 64.0 $\pm$ 1.3\%\\
  \midrule
  Baseline CNN & {75.2 $\pm$ 2.7\%} & {69.5 $\pm$ 2.3\%} \\ 
  CoSen CNN  & \textbf{80.2 $\pm$ 2.5\%} & \textbf{72.6 $\pm$ 1.6\%}  \\
  \bottomrule[0.4mm]
  \vspace{0.1mm}
\end{tabular}
}
\caption{Evaluation on DIL Database.}\vspace{-0.2cm}
\label{tab:DILexp}
\end{table}

\subsection{Convolutional Neural Network}\label{sec:cnn}
We use a deep CNN to learn robust feature representations for the task of image classification. 
The network architecture consists of a total of 18 weight layers (see Fig.~\ref{fig:cnn_arch} for details).
Our architecture is similar to the state-of-the-art CNN (configuration D) proposed in \cite{simonyan2014very}, except that our architecture has two extra fully connected layers before the output layer and the proposed loss layer is \rev{cost-sensitive}.
Since there are a huge number of parameters  ($\sim$139 million) in the network, its not possible to learn all of them from scratch using a relatively smaller number of images.
We, therefore, initialize the first 16 layers of our model with the pre-trained model of \cite{simonyan2014very} and set random weights for the last two fully connected layers.
We then train the full network with a relatively higher learning rate to allow a change in the network parameters.
Note that the \rev{cost-sensitive} (CoSen) CNN is trained with the modified cost functions introduced in Eqs.~(\ref{eq:MSE_loss}-\ref{eq:softmax}).
The CNN trained without \rev{cost-sensitive} loss layer will be used as the baseline CNN in our experiments. 
\rev{Note that the baseline CNN architecture is exactly the same as the CoSen CNN, except that the final layer is a simple CE loss layer.}

\begin{table}
\centering
\scalebox{1}{
\begin{tabular}{@{}lcc@{}}
  \toprule[0.4mm]
  \textbf{Methods}  &  \multicolumn{2}{c}{\textbf{Performances}}  \\
  (using stand. split) & Exp.~1 (2008) & Exp.~2 (2008-2009) \\
  \midrule
   MTM-CCS (LAB) \cite{beijbom2012automated} &{74.3\%}  & 67.3\%\\
   MTM-CCS (RGB) \cite{beijbom2012automated} &{72.5\%}  & 66.0\%\\
  \midrule
  Baseline CNN & {72.9\%} & {66.1\%} \\ 
  CoSen CNN  & \textbf{75.2\%} & \textbf{68.6\%}  \\
  \bottomrule[0.4mm]
  \vspace{0.1mm}
\end{tabular}
}
\caption{ Evaluation on MLC Database.}\vspace{-0.4cm}
\label{tab:MLCexp}
\end{table}

\subsection{Results and Comparisons}\label{subsec:results}
 For the two imbalanced datasets with imbalanced protocols, we summarize our experimental results and comparisons in Tables~\ref{tab:DILexp},~\ref{tab:MLCexp}.
For each of the two datasets, we perform two standard experiments following the works of Beijbom \etal \cite{beijbom2012automated} and Ballerini \etal \cite{ballerini2012non}.
In the first experiment on the DIL dataset, we perform 3-fold cross validation on the 5 classes (namely Actinic Keratosis, Basal Cell Carcinoma, Melanocytic Nevus, Squamous Cell Carcinoma and Seborrhoeic Keratosis) comprising {of} a total of 960 images. 
In the second experiment, we perform 3-fold cross validation on all of the 10 classes in the DIL dataset. 
We 
{achieved} a performance boost of $\sim 5.0\%$ and $\sim 3.1\%$ over the baseline CNN in the first and second experiments respectively (Table~\ref{tab:DILexp}).

\begin{table}
\centering
\scalebox{1}{
\begin{tabular}{@{}lcc@{}}
  \toprule[0.4mm]
  \textbf{Methods} (using stand. split) &  \multicolumn{2}{c}{\textbf{Performances}}  \\
  \midrule
  Deeply Supervised Nets \cite{lee2015deeply} & \multicolumn{2}{c}{{99.6\%}}\\
  \rev{Generalized Pooling Func. \cite{lee2016generalizing}} & \multicolumn{2}{c}{99.7\%} \\
  \rev{Maxout NIN \cite{chang2015batch}} & \multicolumn{2}{c}{\textbf{99.8\%}} \\
  \midrule
  \textbf{Our approach} ($\downarrow$) &  Baseline CNN & CoSen CNN \\
  \midrule
  Stand. split ($\sim$600 trn, $\sim$100 tst) & {99.3\%} & {99.3\%} \\ 
  \hdashline 
  Low rep. (10\%) of  odd digits &  97.6\%  & \textbf{98.6\%}\\
  Low rep. (10\%) of  even digits &  97.1\%  & \textbf{98.4\%}\\
  \hdashline 
  Low rep. (25\%) of  odd digits &   98.1\% & \textbf{98.9\%} \\
  Low rep. (25\%) of  even digits &  97.8\% & \textbf{98.5\%}\\
  \bottomrule[0.4mm]
  \vspace{0.1mm}
\end{tabular}
}
\caption{Evaluation on MNIST Database.}\vspace{-0.2cm}
\label{tab:MNISTexp}
\end{table}

\begin{table}
\centering
\scalebox{1}{
\begin{tabular}{@{}lcc@{}}
  \toprule[.4mm]
  \textbf{Methods} (using stand. split) &  \multicolumn{2}{c}{\textbf{Performances}}  \\
  \midrule
Network in Network \cite{lin2014network} &  \multicolumn{2}{c}{64.3\%} \\
Probablistic Maxout Network \cite{springenberg2014improving} & \multicolumn{2}{c}{61.9\%} \\
Representation Learning \cite{lin2014stable} & \multicolumn{2}{c}{60.8\%} \\
Deeply Supervised Nets \cite{lee2015deeply} & \multicolumn{2}{c}{{65.4\%}} \\
\rev{Generalized Pooling Func. \cite{lee2016generalizing}} & \multicolumn{2}{c}{67.6\%} \\
  \rev{Maxout NIN \cite{chang2015batch}} & \multicolumn{2}{c}{\textbf{71.1\%}} \\
 \midrule
 \textbf{Our approach} ($\downarrow$) & Baseline CNN & CoSen CNN \\
 \midrule
 Stand. split (500 trn, 100 tst) & {65.2\%} & {65.2\%} \\ 
  \hdashline 
  Low rep. (10\%) of  odd digits &  55.0\%  & \textbf{60.1\%}\\
  Low rep. (10\%) of  even digits &  53.8\%  & \textbf{59.8\%}\\
  \hdashline 
  Low rep. (25\%) of  odd digits &  57.7\% & \textbf{61.5\%}\\
  Low rep. (25\%) of  even digits & 57.4\%  & \textbf{61.6\%}\\
  \bottomrule[.4mm]
  \vspace{0.1mm}
\end{tabular}
}
\caption{Evaluation on CIFAR-100 Database.}\vspace{-0.4cm}
\label{tab:CIFARexp}
\end{table}

For the MLC dataset, in the first experiment we train on two-thirds of the data from 2008 and test on the remaining one third.
In the second experiment, data from year 2008 is used for training and tests are performed on data from year 2009. 
Note that in contrast to the `multiple texton maps' (MTM) \cite{beijbom2012automated} approach which extracts features from multiple scales, we only extract features from the $224\times 224$ dimensional patches.
{While we can achieve a larger gain by using multiple scales with our approach,}   
we kept the setting similar to the one used {with the other datasets for consistency}. 
For similar reasons, we used {the} RGB color space instead of LAB, which was shown to perform better on the MLC dataset \cite{beijbom2012automated}.
Compared to {the} baseline CNN, we 
{achieved} a gain of $2.3\%$ and $2.5\%$ on  the first and 
second experiments respectively. 
{Although the gains in the overall accuracy may seem modest, it should be noted that the boost in the average class accuracy is more pronounced}. 
{For example, 
the confusion matrices for DIL and MLC datasets in Fig.~\ref{fig:CM} (corresponding to Exp.~1 and Exp.~2 respectively), show an improvement of $9.5\%$ and $11.8\%$ in the average} class accuracy.
The confusion matrices in Figs.~\ref{fig:DIL_CM_woC},~\ref{fig:DIL_CM_wC},~\ref{fig:MLC_CM_woC} and \ref{fig:MLC_CM_wC} also show  a very significant boost in performance for the least frequent classes e.g., Turf, Macro, Monti, AK and SCC.

\setlength\dashlinedash{3.6pt}
\setlength\dashlinegap{3.5pt}
\setlength\arrayrulewidth{0.3pt}
\begin{table}
\centering
\scalebox{1}{
\begin{tabular}{@{}lcc@{}}
  \toprule[.4mm]
   \textbf{Methods} (using stand. split) &  \multicolumn{2}{c}{\textbf{Performances}}  \\
   \cmidrule{2-3}
   &  15 trn. samples & 30 trn. samples \\
  \midrule
    Multiple Kernels \cite{vedaldi2009multiple} & 71.1 $\pm$ 0.6 & 78.2 $\pm$ 0.4 \\
   LLC$^\dagger$  \cite{wang2010locality} & $-$ & 76.9 $\pm$ 0.4\\
   Imp. Fisher Kernel$^\dagger$ \cite{perronnin2010improving} & $-$ & 77.8 $\pm$ 0.6\\
  SPM-SC \cite{yang2009linear} & 73.2 & 84.3 \\
  DeCAF \cite{donahue2014decaf} & $-$ & 86.9 $\pm$ 0.7\\
   Zeiler \& Fergus \cite{zeiler2014visualizing} & 83.8 $\pm$ 0.5 & 86.5 $\pm$ 0.5 \\
   Chatfield \etal \cite{chatfield2014return} & $-$ & 88.3 $\pm$ 0.6\\
  SPP-net \cite{he2014spatial} & $-$ & {\textbf{91.4} $\pm$ \textbf{0.7}} \\
  \midrule
  \textbf{Our approach} ($\downarrow$) & Baseline CNN & CoSen CNN \\
  \midrule
  Stand. split (15 trn. samples) & \textbf{87.1\%} &{\textbf{87.1\%}} \\
  Stand. split (30 trn. samples) & {90.8\%} &{{90.8\%}} \\
  \midrule
  Org. data distribution & \multirow{2}{*}{88.1\% } & \multirow{2}{*}{\textbf{89.3\%}} \\ 
  (60\%/40\% split) & &  \\
    \hdashline
  Low rep. (10\%) of odd classes  & 77.4\% & \textbf{83.2\%} \\
  Low rep. (10\%) of even classes & 76.1\% & \textbf{82.3\%} \\
  \midrule 
  Org. data distribution & \multirow{2}{*}{85.5\%} & \multirow{2}{*}{\textbf{87.9\%}} \\ 
  (30\%/70\% split) & & \\
    \hdashline
  Low rep. (10\%) of odd classes & 74.6\% & \textbf{80.4\%} \\
  Low rep. (10\%) of even classes & 75.2\% & \textbf{80.9\%} \\
  \bottomrule[0.4mm]
  \vspace{0.1mm}
\end{tabular}
}
\caption{Evaluation on Caltech-101 Database ($\dagger$ figures reported in \cite{chatfield2011devil}).}\vspace{-0.2cm}
\label{tab:Caltechexp}
\end{table}

\begin{table}
\centering
\scalebox{1}{
\begin{tabular}{@{}lcc@{}}
  \toprule[.4mm]
  \textbf{Methods} (using stand. split) &  \multicolumn{2}{c}{\textbf{Performances}}  \\
  \midrule
  Spatial Pooling Regions  \cite{linlearning14} & \multicolumn{2}{c}{$50.1\%$} \\
 VC + VQ  \cite{li2013harvesting} & \multicolumn{2}{c}{$52.3\%$} \\
 CNN-SVM  \cite{razavian2014cnn} & \multicolumn{2}{c}{$58.4\%$} \\
Improved Fisher Vectors \cite{juneja2013blocks}  & \multicolumn{2}{c}{$60.8\%$} \\
 Mid Level Representation  \cite{doersch2013mid} & \multicolumn{2}{c}{$64.0\%$}\\
Multiscale Orderless Pooling \cite{GongMOP14}  & \multicolumn{2}{c}{$68.9\%$} \\
 \midrule
 \textbf{Our approach} ($\downarrow$) & Baseline CNN & CoSen CNN \\
  \midrule
  Stand. split (80 trn, 20 tst) & \textbf{70.9\%} &\textbf{70.9\%} \\
    \midrule
  Org. data distribution & \multirow{2}{*}{70.7\%} & \multirow{2}{*}{\textbf{73.2\%}} \\ 
  (60\%/40\% split) & & \\
  \hdashline 
  Low rep. (10\%) of odd classes & 50.4\%  & \textbf{56.9\%}   \\
  Low rep. (10\%) of even classes & 50.1\%    & \textbf{56.4\%} \\
   \midrule
  Org. data distribution & \multirow{2}{*}{61.9\%} & \multirow{2}{*}{\textbf{66.2\%}} \\ 
  (30\%/70\% split) &   & \\
  \hdashline 
  Low rep. (10\%) of odd classes  & 38.7\% & \textbf{44.7\%} \\
  Low rep. (10\%) of even classes & 37.2\% & \textbf{43.4\%} \\
  \bottomrule[.4mm]
  \vspace{0.1mm}
\end{tabular}
}
\caption{Evaluation on MIT-67 Database.}\vspace{-0.4cm}
\label{tab:MIT67exp}
\end{table}


\begin{table}
\setlength{\tabcolsep}{4pt}
\centering
\begin{tabular}{@{}l@{\;}c@{\;\;}c@{\;\;}c@{\;\;}c@{}}
\toprule[.4mm]
\textbf{Dataset} & \multicolumn{2}{c}{\bf F-measure} & \multicolumn{2}{c}{\bf G-mean} \\
\cmidrule{2-5}
 & Baseline CNN & CoSen CNN & Baseline CNN & CoSen CNN \\
\midrule
MNIST & 0.488 & 0.493 & 0.987 & 0.992 \\
CIFAR-100 & 0.283 & 0.307 & 0.736 & 0.766 \\
Caltech-101 & 0.389 & 0.416 & 0.873 & 0.905 \\
MIT-67 & 0.266 & 0.302 & 0.725 & 0.772 \\
DIL & 0.343 & 0.358 & 0.789 & 0.813 \\
MLC & 0.314 & 0.338 & 0.635 & 0.723 \\
\bottomrule[.4mm]
\vspace{0.1mm}
\end{tabular}
\captionof{table}{\rev{The table shows the F-measure and G-mean scores for the baseline and cost-sensitive CNNs. The experimental protocols used for each dataset are shown in Fig.~\ref{fig:exp_set}. CosSen CNN consistently outperforms the Baseline CNN on all datasets.} }
\label{tab:gnf}
\end{table}

\begin{table*}
\setlength{\tabcolsep}{4pt}
\centering
\scalebox{1}{
\begin{tabular}{@{}lc@{}cccccccc@{}}
\toprule[.4mm]
\textbf{Datasets} &  &  \multicolumn{8}{c}{\textbf{Performances}} \\
\cmidrule{3-10}
(Imbalaned       & Experimental & Over-sampling  & Under-sampling & Hybrid-sampling & CoSen SVM & CoSen RF & \rev{SOSR} &  Baseline  & CoSen \\
 protocols) & Setting & (SMOTE \cite{chawla2002smote}) & (RUS \cite{mani2003knn}) & (SMOTE-RSB$^{*}$\cite{ramentol2012smote}) & (WSVM \cite{tang2009svms}) &(WRF \cite{chen2004using}) &  \rev{CNN \cite{chung2015cost}} & CNN & CNN \\
\midrule
MNIST & 10\% of odd classes  & 94.5\% & 92.1\% & 96.0\% & 96.8\%  & 96.3\% & \rev{97.8\%} & 97.6\% & \textbf{98.6\%} \\
CIFAR-100 & 10\% of odd classes & 32.2\% & 28.8\% & 37.5\% & 39.9\% & 39.0\% & \rev{55.8\%} & 55.0\% & \textbf{60.1\%} \\
Caltech-101 & 60\% trn, 10\% of odd cl. & 67.7\% & 61.4\% & 68.2\% & 70.1\% & 68.7\% & \rev{77.4\%} & 77.4\% & \textbf{83.2\%} \\
MIT-67 & 60\% trn, 10\% of odd cl. & 33.9\% & 28.4\% & 34.0\% & 35.5\% & 35.2\% & \rev{49.8\%} & 50.4\% & \textbf{56.9\%} \\
{DIL} & {stand. split (Exp. 2)} & 50.3\% & 46.7\% & 52.6\% & 55.3\% & 54.7\% & \rev{68.9\%} & 69.5\% & \textbf{72.6\%} \\
{MLC} & {stand. split (Exp. 2)} & 38.9\% & 31.4\% & 43.0\% & 47.7\%& 46.5\% & \rev{65.7\%} & 66.1\% & \textbf{68.6\%}\\
\bottomrule[.4mm]
\vspace{0.1mm}
\end{tabular}
}
\caption{Comparisons of our approach with the \rev{state-of-the-art} class-imbalance approaches. The experimental protocols used for each dataset are shown in Fig.~\ref{fig:exp_set}. With highly imbalanced training sets, our approach significantly out-performs other data sampling and \rev{cost-sensitive} classifiers on all four classification datasets. }
\label{tab:comp_imbalanced}\vspace{-0.2cm}
\end{table*}

\begin{SCfigure*}
\centering
\begin{minipage}[t]{0.20\textwidth}
\centering
\caption{ The imbalanced training set distributions used for the comparisons reported in Tables~\ref{tab:gnf}-\ref{tab:comp_fixedcost}. Note that for the DIL and the MLC datasets, these distributions are {the} same as the standard protocols. For the MLC dataset, only the training set distribution for the first experiment is shown here which is very similar to the training set distribution of {the} second experiment (\emph{best viewed when enlarged}). }
\label{fig:exp_set}
\end{minipage}

\begin{minipage}[t]{0.80\textwidth}
\centering
{\begin{subfigure}[t]{0.31\textwidth}
\includegraphics[width=\columnwidth]{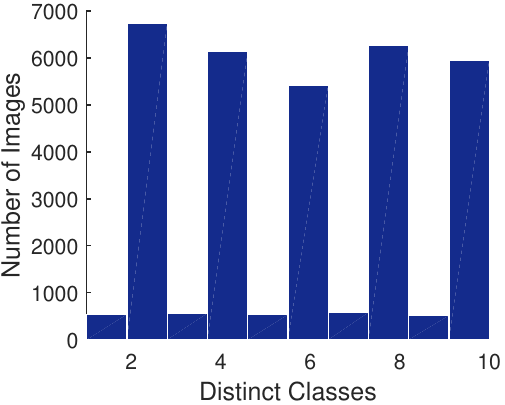}
\caption{MNIST Training Set}
\label{fig:mnist_trn}
\end{subfigure}
\hfill
\begin{subfigure}[t]{0.31\textwidth}
\includegraphics[width=\columnwidth]{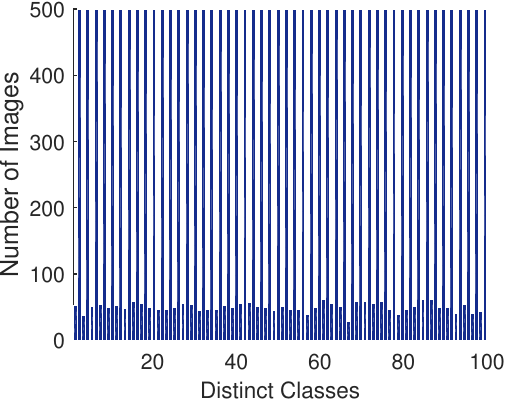}
\caption{CIFAR-100 Training Set}
\label{cifar100_trn}
\end{subfigure}
\hfill
\begin{subfigure}[t]{0.31\textwidth}
\includegraphics[width=\textwidth]{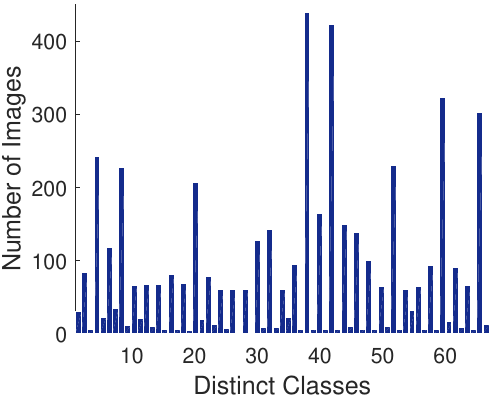}
\caption{MIT-67 Training Set}
\label{fig:mit67_trn}
\end{subfigure}
\newline
\begin{subfigure}[t]{0.31\textwidth}
\includegraphics[width=\textwidth]{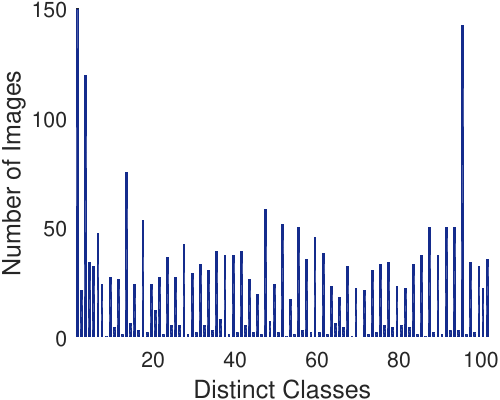}
\caption{Caltech-101 Training Set}
\label{fig:caltech101_trn}
\end{subfigure}
\hfill
\begin{subfigure}[t]{0.28\textwidth}
\includegraphics[width=\textwidth]{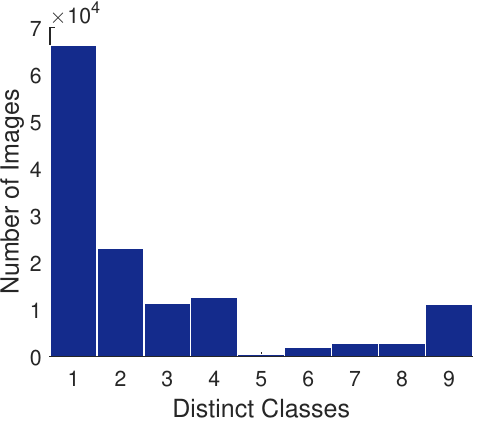}
\caption{MLC Training Set}
\label{fig:mlc_trn}
\end{subfigure}
\hfill
\begin{subfigure}[t]{0.31\textwidth}
\includegraphics[width=\textwidth]{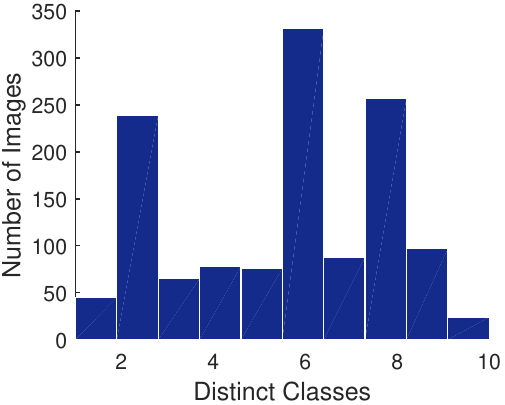}
\caption{DIL Training Set}
\label{fig:dil_trn}
\end{subfigure}}
\end{minipage}

\vspace{-0.4cm}
\label{fig:exp_set}
\end{SCfigure*}

{Our results for the two balanced datasets, MNIST and CIFAR-100, are reported in Tables~\ref{tab:MNISTexp},~\ref{tab:CIFARexp} on} the standard splits along-with the deliberately imbalanced splits.
%
%
{To imbalance the training distributions, we used the available/normal training data for the even classes and only 25\% and 10\% of data for the odd classes.} 
Similarly, we experimented by keeping the normal representation of {the} odd classes  and reducing the representation of {the} even classes to only 25\% and 10\%.
{Our results show that the performance of our approach is equal to the performance of the baseline method when the distribution is balanced, but when the imbalance ratios increase, our approach produces significant improvements over the baseline CNN (which is trained without using the \rev{cost-sensitive} loss layer)}. 
%
We also compare with the \rev{state-of-the-art} techniques which report results on the standard split\footnote{Note that the standard split on {the} Caltech-101 and MIT-67 is different from the original data distribution (see Sec.~\ref{subsec:datasets} for details). } and demonstrate that our performances {are better or comparable}.
Note that for the MNIST digit dataset, nearly all the top performing approaches use distortions (affine and/or elastic) and {data augmentation to achieve} a significant boost in performance.
In contrast, our baseline and \rev{cost-sensitive} CNNs do not use any form of distortions/augmentation during the training and testing procedures on MNIST.

We also experiment on the two popular classification datasets which are originally imbalanced, 
{and for which} the standard protocols use an equal number of images for all training classes. For example, 30 or 15 images are used for the case of Clatech-101 while 80 images per category are used in MIT-67 for training.
We report our results on the standard splits (Tables~\ref{tab:Caltechexp},~\ref{tab:MIT67exp}), to compare with the \rev{state-of-the-art} approaches, and show that our results are superior to the \rev{state-of-the-art} on MIT-67 and competitive on the Caltech-101 dataset. 
Note that the \rev{best-performing} SPP-net \cite{he2014spatial} uses multiple sizes of Caltech-101 images during training.
In contrast, we only use a single {consistent} size during training and testing.
We also experiment with the original imbalanced data distributions to train the CNN with the modified loss function. 
For the original data distributions, we use both 60\%/40\% and 30\%/70\% train/test splits  to show our performances with a variety of train/test distributions. 
Moreover, with these imbalanced splits, we further decrease the data of odd and even classes to just 10\% respectively, and observe a better relative performance of our proposed approach compared to the baseline method.

\rev{We report F-measure and G-mean scores on all the six datasets in Table~\ref{tab:gnf}. The metric calculation details are provided in Sec.~\ref{subsec:perfmetric}. The most unbalanced splits (Fig.~\ref{fig:exp_set}) are used for each dataset to clearly demonstrate the benefit of class-specific costs. We note that the cost-sensitive CNN model clearly out-performs the baseline model for all experiments.}

%
%
%
%
%
%



The comparisons with the best approaches for class-imbalance learning are shown in Table \ref{tab:comp_imbalanced}.
Note that we used a high degree of imbalance for the case of all six datasets to clearly show the impact of the class imbalance {problem on the performance of the different} approaches (Fig.\ref{fig:exp_set}).
For {fairness and conclusive} comparisons, our experimental procedure was kept as close as possible to the  proposed CoSen CNN.
For example, for the case of CoSen Support Vector Machine (SVM) and Random Forest (RF) classifiers, we used the $4096$ dimensional features extracted from the pre-trained deep CNN (D) \cite{simonyan2014very}. 
Similarly, for the cases of over and under-sampling, we used the same $4096$ dimensional features, {which have 
shown} to perform well on other classification datasets.
\rev{A two-layered neural network was used for classification with these sampling procedures.  }
We {also} report comparisons with all types of data sampling techniques i.e., over-sampling (SMOTE \cite{chawla2002smote}), under-sampling (Random Under Sampling - RUS \cite{mani2003knn}) and hybrid sampling (SMOTE-RSB$^{*}$ \cite{ramentol2012smote}). 
Note that despite the simplicity of the approaches  {in} \cite{chawla2002smote, mani2003knn}, they have been shown to perform very well on imbalanced datasets in data mining \cite{garcia2007class,he2009learning}. 
We also compare with the \rev{cost-sensitive} versions of popular classifiers (weighted SVM \cite{tang2009svms} and weighted RF \cite{chen2004using}).
For the case of weighted SVM, we used the standard implementation of LIBSVM \cite{chang2011libsvm} and set the \rev{class-dependent} costs based on the proportion of each class in the training set. \rev{Finally, we experiment with a recent cost-sensitive deep learning based technique of Chung \etal \cite{chung2015cost}. Unlike our approach, \cite{chung2015cost} does not automatically learn class-specific costs. To have a fair comparison, we incorporate their proposed smooth one-sided regression (SOSR) loss as the last layer of the baseline CNN model in our experiments. Similar to \cite{chung2015cost}, we use the approach proposed in \cite{abe2004iterative} to generate fixed cost matrices.}
Our proposed approach demonstrates a significant improvement over all of the \rev{cost-sensitive} class imbalance methods.

\begin{table}
\setlength{\tabcolsep}{4pt}
\centering
\scalebox{0.95}{
\begin{tabular}{@{}l@{\;}c@{\;\;}c@{\;\;}c@{\;}c@{}}
\toprule[.4mm]
\textbf{Datasets} & \multicolumn{4}{c}{\textbf{Performances}} \\
\cmidrule{2-5}
(Imbalaned      & CoSen-CNN  & CoSen-CNN & CoSen-CNN & CoSen-CNN \\
 protocols) & Fixed Cost (H) &  Fixed Cost (S) & Fixed Cost (M) & Adap. \\
\midrule
MNIST     &  97.2\% & 97.2\% & 97.9\% & \textbf{98.6\%} \\
CIFAR-100 &  55.2\% & 55.8\% & 56.0\% & \textbf{60.1\%} \\
Caltech-101 &  76.2\% & 77.1\% & 77.7\% & \textbf{83.0\%} \\
MIT-67 & 51.6\% & 50.9\% & 49.7\% &  \textbf{57.0\%} \\
DIL & 70.0\% & 69.5\% & 69.3\% &  \textbf{72.6\%} \\
MLC & 66.3\% & 66.8\% & 65.7\% &  \textbf{68.6\%} \\
\bottomrule[.4mm]
\vspace{0.1mm}
\end{tabular}
}
\caption{{Comparisons of our approach (adaptive costs) with the fixed class-specific costs. The experimental protocols used for each dataset are shown in Fig.~\ref{fig:exp_set}. Fixed costs do not show a significant and consistent improvement in results.}}
\label{tab:comp_fixedcost}\vspace{-0.2cm}
\end{table}

{
Since our approach updates the costs with respect to the data statistics (i.e., data distribution, class separability and classification errors), 
{an interesting aspect is to analyse the performance when the costs are fixed and set equal to these statistics instead of updating them adaptively}. 
We experiment with fixed costs instead of adaptive costs in the {case of} CoSen-CNN. For this purpose, we used three versions of fixed costs, based on the class representation (H), data separability (S) and classification errors (M). 
Table~\ref{tab:comp_fixedcost} shows the results for each dataset with four different types of costs. 
{The results} show that none of the fixed costs significantly improve the performance in comparison to the adaptive cost.
This shows that the optimal costs are not the H, S and M themselves, rather an intermediate set of values give the best performance for cost-sensitive learning. }

\rev{Lastly, we observed  a smooth reduction in training and validation error for the case of cost-sensitive CNN. We show a comparison of classification errors between baseline and cost-sensitive CNNs at different training epochs in Fig.~\ref{fig:CIFAR_error}.}

\begin{figure}
\centering
\includegraphics[width = 0.85\columnwidth]{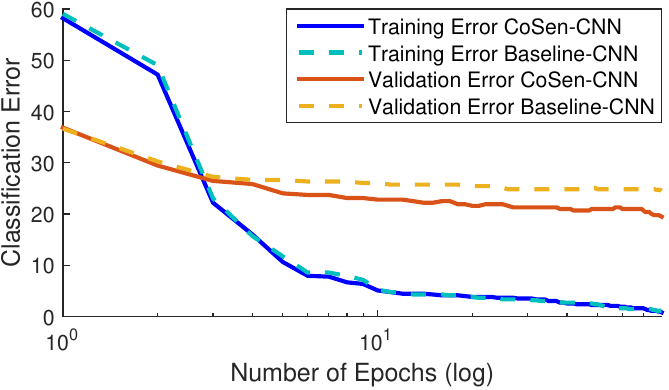} \\
\vspace{1mm}
\caption{ An observed decrease in the training and validation error on the DIL dataset (stand. split, Exp.~2) for the cases of {the} baseline and cost-sensitive CNNs.}\vspace{-0.2cm}
\label{fig:CIFAR_error}
\end{figure}

\vspace{0.2cm}
\noindent
\textbf{Timing Comparisons:}
The introduction of the \rev{class-dependent} costs did not prove to be prohibitive during the training of the CNN. 
For example, on an Intel quad core i7-4770 CPU (3.4GHz) with 32Gb RAM and Nvidia GeForce GTX 660 card (2GB), it took $80.19$ secs and $71.87$ secs to run one epoch with and without class sensitive parameters, respectively for the MIT-67 dataset.
At test time, the CoSen CNN took the same amount of time as that of the baseline CNN, because no extra computations were involved during testing.

\section{Conclusion}\label{sec:conclusion}
{ We proposed a \rev{cost-sensitive} deep CNN to deal with the class-imbalance problem, {which is commonly encountered when dealing with} \rev{real-world} datasets.
Our {approach}
is able to automatically set the \rev{class-dependent} costs based on the data statistics 
{of} the training set. 
We analysed three commonly used cost functions and introduced \rev{class-dependent} costs for each case. 
We show that the cost-sensitive CE loss function is c-calibrated and guess aversive. 
Furthermore, we proposed an alternating optimization procedure to efficiently learn the \rev{class-dependent} costs as well as the network parameters. 
Our results on six popular classification datasets show that the modified cost functions perform very well on the majority as well as on the minority classes in the dataset.}


\ifCLASSOPTIONcompsoc

  \section*{Acknowledgments}
  \small{This research was supported by an IPRS awarded by The University of Western Australia and Australian Research Council (ARC) grants
  {DP150100294} and DE120102960. 
  }
\else
 \section*{Acknowledgment}
  \small{This research was supported by an IPRS awarded by The University of Western Australia and Australian Research Council (ARC) grants
  {DP150100294} and DE120102960. 
  }
\fi

\appendices
\section{Proofs Regarding Cost Matrix $\xi'$}\label{sec:app}
\begin{lemma}
Offsetting the columns of the cost matrix $\xi'$ by any constant `$c$' does not affect the associated classification risk $\mathcal{R}$.  
\end{lemma}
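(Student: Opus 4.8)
The plan is to work directly from the definition of the expected (Bayes) risk $\mathcal{R}(p\mid\mathbf{x}) = \sum_q \xi'_{p,q}P(q\mid\mathbf{x})$ and to track how this quantity transforms when each column of $\xi'$ is shifted. First I would pin down precisely what ``offsetting the columns by a constant $c$'' means: replacing every entry $\xi'_{p,q}$ by $\tilde{\xi}'_{p,q} = \xi'_{p,q} + c$, where the offset is applied uniformly down each column. The crucial structural fact is that the column index $q$ is exactly the summation index in the risk (it ranges over the candidate true classes weighted by the posterior), whereas the row index $p$ is the fixed prediction whose risk is being evaluated.

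Next I would substitute the shifted matrix into the risk and use linearity of the sum to isolate the offset contribution, obtaining $\tilde{\mathcal{R}}(p\mid\mathbf{x}) = \sum_q (\xi'_{p,q}+c)P(q\mid\mathbf{x}) = \mathcal{R}(p\mid\mathbf{x}) + c\sum_q P(q\mid\mathbf{x})$. The decisive observation is that $P(\cdot\mid\mathbf{x})$ is a posterior distribution, so $\sum_q P(q\mid\mathbf{x}) = 1$ and the extra term collapses to the constant $c$ --- a quantity independent of the prediction $p$. (If one instead permits a distinct offset $c_q$ per column, the identical computation yields the added term $\sum_q c_q P(q\mid\mathbf{x})$, which is again independent of $p$.)

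Finally I would invoke the Bayes decision rule of Eq.~(\ref{eq:ideal_clsf}), $p^* = \arg\min_p \mathcal{R}(p\mid\mathbf{x})$. Since the two risk functions differ only by a term that is constant across all candidate predictions, their minimisers coincide, $\arg\min_p \tilde{\mathcal{R}}(p\mid\mathbf{x}) = \arg\min_p \mathcal{R}(p\mid\mathbf{x})$, so the induced classifier and the decision it produces on every input are identical before and after the offset. I would also contrast this briefly with offsetting \emph{rows}, which would inject a $p$-dependent shift and could therefore alter the argmin; this highlights why the statement is specific to columns.

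The computation is routine, so the only real subtlety --- and the point I would state explicitly rather than treat as automatic --- is conceptual: the numerical value of the risk genuinely changes (by $c$), so the claim must be read as invariance of the \emph{decision} rather than of the scalar risk. Making clear that ``does not affect the classification risk'' is to be understood as ``does not change the Bayes-optimal prediction'' is the step most likely to cause confusion, and I would foreground it.
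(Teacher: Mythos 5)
Your proof is correct, and it takes a genuinely different and more direct route than the paper's. You substitute the shifted matrix straight into the risk and use $\sum_q P(q\mid\mathbf{x})=1$ to show that every candidate prediction's risk changes by exactly the same constant $c$, so all pairwise comparisons --- and hence the $\arg\min$ --- are preserved. The paper instead starts from the optimality inequalities $\sum_q \xi'_{p^*,q}P(q\mid\mathbf{x}) \le \sum_q \xi'_{p,q}P(q\mid\mathbf{x})$ for all $p\neq p^*$, sums the $N-1$ of them into a single aggregated condition of the form $\sum_i \xi'_{i,j} - N\xi'_{p^*,j} \ge 0$ (weighted by the posteriors), and then observes that this aggregate is invariant when a constant is added to a column. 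Your argument is both simpler and logically tighter: it establishes invariance of each individual risk comparison rather than of one summed consequence of optimality, which is all one really needs (and is in fact the stronger statement, since preserving an aggregated inequality does not by itself guarantee the minimiser is unchanged). Your closing remark --- that the scalar value of $\mathcal{R}$ does change by $c$ and the lemma must be read as invariance of the induced decision --- is a worthwhile clarification that the paper leaves implicit, as is the contrast with row offsets, which would introduce a $p$-dependent shift and could move the $\arg\min$.
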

\begin{proof}
From Eq.~1, we have: 
$$
\sum\limits_{q} \xi'_{p^*,q}P(q|\mathbf{x})  \leq \sum\limits_{q} \xi'_{p,q}P(q|\mathbf{x}) 
\quad \forall p \neq p*
$$
{which gives the following} relation:
\begin{align*}
P(p^*|\mathbf{x}) \left(\xi'_{p^*, p^*} - \xi'_{p, p^*}\right)  \leq  \\ 
\sum\limits_{q \neq p^*} P(q|\mathbf{x})\left(\xi'_{p,q} - \xi'_{p^*,q}\right), 
\quad \forall p \neq p*
\end{align*}
As indicated {in Sec. 3.1, }the above expression holds for all $p \neq p*$. 
{For a total number of $N$ classes and an optimal prediction $p^*$, there are $N-1$ of the above relations. }
By adding up the left and the right hand sides of these $N-1$ relations we get: 
\begin{align*}
P(p^*|\mathbf{x})\left((N-1)\xi'_{p^*, p^*} - \sum\limits_{p \neq p^*}\xi'_{p, p^*} \right) \leq \\
\sum\limits_{q \neq p^*} P(q|\mathbf{x})\left(\sum\limits_{p \neq p^*}\xi'_{p,q} - (N-1)\xi'_{p^*,q}\right), 
\end{align*}
This can be simplified to:
\begin{align*}
\mathbf{P}_{\mathbf{x}} \begin{bmatrix} 
 \sum_{i} \xi'_{i,1} - N\xi'_{p^*, 1}\\
 \vdots \\
 \sum_{i} \xi'_{i,N} - N\xi'_{p^*, N}
\end{bmatrix} 
\geq 0,
\end{align*}
where, $\mathbf{P}_{\mathbf{x}} = [P(1|\mathbf{x}), \ldots, P(N|\mathbf{x})]$. 
{Note that the posterior probabilities $\mathbf{P}_{\mathbf{x}}$ are positive ($\sum_i P(i|\mathbf{x}) = 1$ and $P(i|\mathbf{x}) > 0$).}
It can be seen from the above equation that the addition of any constant $c$, does not affect the overall relation, i.e., for any column $j$,
$$
\sum_{i} (\xi'_{i,j} + c) - N(\xi'_{p^*, j} + c) = \sum_{i} \xi'_{i,j} - N\xi'_{p^*, j}
$$
Therefore, the columns of the cost matrix can be shifted by a constant $c$ without any effect on the associated risk. 
\end{proof}

\begin{lemma}
The cost of the true class should be less than the mean cost of all misclassification.
\end{lemma}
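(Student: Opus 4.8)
The plan is to derive the claim from the general risk inequality that was already established in the proof of Lemma~\ref{lem:zero}, and then specialise it to a degenerate posterior. Recall that, for the Bayes-optimal prediction $p^*$ associated with a posterior $\mathbf{P}_{\mathbf{x}}$, that proof produced the relation
\begin{align*}
\sum_{j} P(j|\mathbf{x})\left(\sum_{i} \xi'_{i,j} - N\,\xi'_{p^*,j}\right) \;\geq\; 0,
\end{align*}
which holds for \emph{every} admissible posterior $\mathbf{P}_{\mathbf{x}}$. First I would fix a class $t$ and evaluate this inequality at the extreme posterior that concentrates all of its mass on $t$, i.e.\ $P(t|\mathbf{x}) = 1$ and $P(j|\mathbf{x}) = 0$ for $j\neq t$. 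This corresponds to an input whose label is certain, which is precisely the situation in which a sensible classifier must output the correct label; hence the Bayes-optimal decision is $p^*=t$, equivalently $\xi'_{t,t} \le \xi'_{i,t}$ for all $i$.

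Substituting this deterministic posterior collapses the outer sum to its $t$-th term and yields $\sum_{i}\xi'_{i,t} - N\,\xi'_{t,t} \ge 0$. I would then separate the diagonal entry from the off-diagonal ones by writing $\sum_{i}\xi'_{i,t} = \xi'_{t,t} + \sum_{i\neq t}\xi'_{i,t}$ and cancelling one copy of $\xi'_{t,t}$, which gives $(N-1)\,\xi'_{t,t} \le \sum_{i\neq t}\xi'_{i,t}$, i.e.
\begin{align*}
\xi'_{t,t} \;\leq\; \frac{1}{N-1}\sum_{i\neq t}\xi'_{i,t}.
\end{align*}
Here the left-hand side is the cost of the true class $t$ (a correct decision) and the right-hand side is exactly the mean of the $N-1$ misclassification costs attached to class $t$, which is the assertion of the lemma. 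Since $t$ was arbitrary, the condition holds for every class.

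The main obstacle is upgrading the weak inequality to the strict ``less than'' stated in the lemma, together with justifying that correct classification is optimal for a certain instance. I would argue that strictness follows as soon as at least one misclassification is genuinely penalised, i.e.\ $\xi'_{t,t} < \xi'_{i,t}$ for some $i\neq t$; this mild non-degeneracy is guaranteed by the non-negativity assumption combined with the normalisation $\xi'_{t,t}=0$ permitted by Lemma~\ref{lem:zero}, under which the averaged right-hand side is strictly positive whenever any off-diagonal cost is nonzero. A secondary subtlety to handle carefully is the index convention (prediction versus true class), so that the column picked out by the degenerate posterior really does collect the misclassification costs of the fixed true class $t$ rather than a mixed set of entries.
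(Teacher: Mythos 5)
Your proof is correct and takes essentially the same route as the paper's: both start from the aggregated inequality established in the proof of Lemma~\ref{lem:zero} and extract the single-column constraint $\sum_i \xi'_{i,t} - N\xi'_{t,t} \ge 0$ (the paper by asserting componentwise non-negativity over arbitrary posteriors, you by evaluating at a posterior concentrated on the true class), and your bound $\xi'_{t,t} \le \tfrac{1}{N-1}\sum_{i\ne t}\xi'_{i,t}$ is algebraically identical to the paper's $\xi'_{t,t} \le \tfrac{1}{N}\sum_i \xi'_{i,t}$. Your added remarks on strictness and on why $p^* = t$ for a certain instance address points the paper leaves implicit, but they do not change the underlying argument.
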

\begin{proof}
Since, $\mathbf{P}_{\mathbf{x}}$ can take any distribution of values, we end up with the following constraint:
\begin{align*}
\sum_{i} \xi'_{i,j} - N\xi'_{p^*, j} \geq 0, \quad j \in [1,N].
\end{align*}
For a correct prediction $p^*$, $P(p^*|\mathbf{x}) > P(p|\mathbf{x}), \forall p \neq p^*$. 
Which implies that: 
\begin{align*}
\xi'_{p^*, p^*} \leq \frac{1}{N} \sum_{i} \xi'_{i,p^*}.
\end{align*}
It can be seen that the cost insensitive matrix (when $\text{diag(}\xi'\text{)} = 0$ and $\xi'_{i,j} = 1, \forall j \neq i$) satisfies this relation and provides the upper bound.
\end{proof}

\begin{lemma}\label{lem:three}
 The cost matrix $\xi$ for a cost-insensitive loss function is an all-ones matrix, $\mathbf{1}^{p\times p}$, rather than a $\mathbf{1}-\mathbf{I}$ matrix,  
{as in} the case of the traditionally used cost matrix $\xi'$. 
 \end{lemma}
 \begin{proof}
 With all costs equal to the multiplicative identity i.e., $\xi_{p,q} = 1$, the CNN activations will remain unchanged. Therefore, all decisions have a uniform cost of $1$ and the classifier is cost-insensitive. 
 \end{proof}
 \begin{lemma}\label{lem:four}
 All costs in $\xi$ are positive, i.e., $\xi \succ 0$. 
 \end{lemma}
 \begin{proof}
 {We adopt a proof by contradiction.}
 Let us suppose that $\xi_{p,q} = 0$. 
 During training {in this case, the} corresponding score for class $q$ ($s_{p,q}$) will always be zero for all samples belonging to class $p$. 
 {As a result,} the output activation ($y_{q}$) and {the} back-propagated error will be independent of the weight parameters of the network, which proves the Lemma.
 \end{proof}
 \begin{lemma}
 The cost matrix $\xi$ is defined such that all {of its} elements in are within the range $(0,1]$, i.e., $ \xi_{p,q} \in (0,1]$.
 \end{lemma}
 \begin{proof}
 {Based on} Lemmas~\ref{lem:three} and \ref{lem:four}, it is trivial that the costs are with-in the range $(0,1]$.
 \end{proof}
 \begin{lemma}\label{lem:six}
 Offsetting the columns of {the} cost matrix $\xi$ can lead to an equally probable guess point. 
 \end{lemma}
 \begin{proof}
 {Let us} consider the case of a cost-insensitive loss function. {In this case, $\xi = \mathbf{1}$ (from Lemma~\ref{lem:three}).} 
 Offsetting all of its columns by a constant $c=1$ will lead to $\xi=\mathbf{0}$. For $\xi=\mathbf{0}$, {the} CNN outputs will be zero for any $\mathbf{o}^{(i)} \in \mathbb{R}^{N}$. Therefore, the classifier will make a random guess for classification. 
 \end{proof}

\bibliographystyle{IEEEtran}
{\small \bibliography{egbib}}

\begin{thebibliography}{10}
\providecommand{\url}[1]{#1}
\csname url@samestyle\endcsname
\providecommand{\newblock}{\relax}
\providecommand{\bibinfo}[2]{#2}
\providecommand{\BIBentrySTDinterwordspacing}{\spaceskip=0pt\relax}
\providecommand{\BIBentryALTinterwordstretchfactor}{4}
\providecommand{\BIBentryALTinterwordspacing}{\spaceskip=\fontdimen2\font plus
\BIBentryALTinterwordstretchfactor\fontdimen3\font minus
  \fontdimen4\font\relax}
\providecommand{\BIBforeignlanguage}[2]{{%
\expandafter\ifx\csname l@#1\endcsname\relax
\typeout{** WARNING: IEEEtran.bst: No hyphenation pattern has been}%
\typeout{** loaded for the language `#1'. Using the pattern for}%
\typeout{** the default language instead.}%
\else
\language=\csname l@#1\endcsname
\fi
#2}}
\providecommand{\BIBdecl}{\relax}
\BIBdecl

\bibitem{lin2014network}
M.~Lin, Q.~Chen, and S.~Yan, ``Network in network,'' in \emph{Statistical
  Language and Speech Processing}, 2013.

\bibitem{chatfield2014return}
K.~Chatfield, K.~Simonyan, A.~Vedaldi, and A.~Zisserman, ``Return of the devil
  in the details: Delving deep into convolutional nets,''
  \emph{arXiv:1405.3531}, 2014.

\bibitem{lee2015deeply}
C.-Y. Lee, S.~Xie, P.~Gallagher, Z.~Zhang, and Z.~Tu, ``Deeply-supervised
  nets,'' 2015.

\bibitem{beijbom2012automated}
O.~Beijbom, P.~J. Edmunds, D.~Kline, B.~G. Mitchell, D.~Kriegman \emph{et~al.},
  ``Automated annotation of coral reef survey images,'' in \emph{Computer
  Vision and Pattern Recognition (CVPR), 2012 IEEE Conference on}.\hskip 1em
  plus 0.5em minus 0.4em\relax IEEE, 2012, pp. 1170--1177.

\bibitem{chawla2002smote}
N.~V. Chawla, K.~W. Bowyer, L.~O. Hall, and W.~P. Kegelmeyer, ``Smote:
  Synthetic minority over-sampling technique,'' \emph{JAIR}, vol.~16, no.~1,
  pp. 321--357, 2002.

\bibitem{khan2014automatic}
S.~H. Khan, M.~Bennamoun, F.~Sohel, and R.~Togneri, ``Automatic feature
  learning for robust shadow detection,'' in \emph{CVPR}.\hskip 1em plus 0.5em
  minus 0.4em\relax IEEE, 2014, pp. 1939--1946.

\bibitem{zhang2013cost}
Y.~Zhang and D.~Wang, ``A cost-sensitive ensemble method for class-imbalanced
  datasets,'' in \emph{AAA}, vol. 2013.\hskip 1em plus 0.5em minus 0.4em\relax
  Hindawi, 2013.

\bibitem{ramentol2012smote}
E.~Ramentol, Y.~Caballero, R.~Bello, and F.~Herrera, ``Smote-rsb*: A hybrid
  preprocessing approach based on oversampling and undersampling for high
  imbalanced data-sets using smote and rough sets theory,'' \emph{KIS},
  vol.~33, no.~2, pp. 245--265, 2012.

\bibitem{zhou2006training}
Z.-H. Zhou and X.-Y. Liu, ``Training cost-sensitive neural networks with
  methods addressing the class imbalance problem,'' \emph{Knowledge and Data
  Engineering, IEEE Transactions on}, vol.~18, no.~1, pp. 63--77, 2006.

\bibitem{khan2015discriminative}
S.~H. Khan, M.~Hayat, M.~Bennamoun, R.~Togneri, and F.~Sohel, ``A
  discriminative representation of convolutional features for indoor scene
  recognition,'' \emph{arXiv preprint arXiv:1506.05196}, 2015.

\bibitem{maaten2013learning}
L.~Maaten, M.~Chen, S.~Tyree, and K.~Q. Weinberger, ``Learning with
  marginalized corrupted features,'' in \emph{Proceedings of the 30th
  International Conference on Machine Learning (ICML-13)}, 2013, pp. 410--418.

\bibitem{krizhevsky2012imagenet}
A.~Krizhevsky, I.~Sutskever, and G.~E. Hinton, ``Imagenet classification with
  deep convolutional neural networks,'' in \emph{NIPS}, 2012, pp. 1097--1105.

\bibitem{srivastava2014dropout}
N.~Srivastava, G.~Hinton, A.~Krizhevsky, I.~Sutskever, and R.~Salakhutdinov,
  ``Dropout: A simple way to prevent neural networks from overfitting,''
  \emph{The Journal of Machine Learning Research}, vol.~15, no.~1, pp.
  1929--1958, 2014.

\bibitem{garcia2007class}
V.~Garcia, J.~Sanchez, J.~Mollineda, R.~Alejo, and J.~Sotoca, ``The class
  imbalance problem in pattern classification and learning,'' in \emph{Congreso
  Espanol de Informatica}, 2007, pp. 1939--1946.

\bibitem{wang2014hybrid}
K.-J. Wang, B.~Makond, K.-H. Chen, and K.-M. Wang, ``A hybrid classifier
  combining smote with pso to estimate 5-year survivability of breast cancer
  patients,'' \emph{ASC}, vol.~20, pp. 15--24, 2014.

\bibitem{han2005borderline}
H.~Han, W.-Y. Wang, and B.-H. Mao, ``Borderline-smote: a new over-sampling
  method in imbalanced data sets learning,'' in \emph{AIC}.\hskip 1em plus
  0.5em minus 0.4em\relax Springer, 2005, pp. 878--887.

\bibitem{bunkhumpornpat2009safe}
C.~Bunkhumpornpat, K.~Sinapiromsaran, and C.~Lursinsap, ``Safe-level-smote:
  Safe-level-synthetic minority over-sampling technique for handling the class
  imbalanced problem,'' in \emph{AKDDM}.\hskip 1em plus 0.5em minus 0.4em\relax
  Springer, 2009, pp. 475--482.

\bibitem{maciejewski2011local}
T.~Maciejewski and J.~Stefanowski, ``Local neighbourhood extension of smote for
  mining imbalanced data,'' in \emph{CIDM}.\hskip 1em plus 0.5em minus
  0.4em\relax IEEE, 2011, pp. 104--111.

\bibitem{batista2004study}
G.~E. Batista, R.~C. Prati, and M.~C. Monard, ``A study of the behavior of
  several methods for balancing machine learning training data,'' \emph{ACM
  Sigkdd Explorations Newsletter}, vol.~6, no.~1, pp. 20--29, 2004.

\bibitem{jeatrakul2010classification}
P.~Jeatrakul, K.~W. Wong, and C.~C. Fung, ``Classification of imbalanced data
  by combining the complementary neural network and smote algorithm,'' in
  \emph{NIP}.\hskip 1em plus 0.5em minus 0.4em\relax Springer, 2010, pp.
  152--159.

\bibitem{gao2014construction}
M.~Gao, X.~Hong, and C.~J. Harris, ``Construction of neurofuzzy models for
  imbalanced data classification,'' \emph{TFS}, vol.~22, no.~6, pp. 1472--1488,
  2014.

\bibitem{zhang2014imbalanced}
Y.~Zhang, P.~Fu, W.~Liu, and G.~Chen, ``Imbalanced data classification based on
  scaling kernel-based support vector machine,'' \emph{NCA}, vol.~25, no. 3-4,
  pp. 927--935, 2014.

\bibitem{li2014boosting}
K.~Li, X.~Kong, Z.~Lu, L.~Wenyin, and J.~Yin, ``Boosting weighted elm for
  imbalanced learning,'' \emph{Neurocomputing}, vol. 128, pp. 15--21, 2014.

\bibitem{wang2010boosting}
B.~X. Wang and N.~Japkowicz, ``Boosting support vector machines for imbalanced
  data sets,'' \emph{KIS}, vol.~25, no.~1, pp. 1--20, 2010.

\bibitem{wu2003class}
G.~Wu and E.~Y. Chang, ``Class-boundary alignment for imbalanced dataset
  learning,'' in \emph{ICML workshop}, 2003, pp. 49--56.

\bibitem{wu2005kba}
------, ``Kba: Kernel boundary alignment considering imbalanced data
  distribution,'' \emph{TKDE}, vol.~17, no.~6, pp. 786--795, 2005.

\bibitem{girshick2014rich}
R.~Girshick, J.~Donahue, T.~Darrell, and J.~Malik, ``Rich feature hierarchies
  for accurate object detection and semantic segmentation,'' in \emph{Computer
  Vision and Pattern Recognition (CVPR), 2014 IEEE Conference on}.\hskip 1em
  plus 0.5em minus 0.4em\relax IEEE, 2014, pp. 580--587.

\bibitem{huang2004learning}
K.~Huang, H.~Yang, I.~King, and M.~R. Lyu, ``Learning classifiers from
  imbalanced data based on biased minimax probability machine,'' in
  \emph{CVPR}, vol.~2.\hskip 1em plus 0.5em minus 0.4em\relax IEEE, 2004, pp.
  II--558.

\bibitem{kukar1998cost}
M.~Kukar, I.~Kononenko \emph{et~al.}, ``Cost-sensitive learning with neural
  networks.'' in \emph{ECAI}.\hskip 1em plus 0.5em minus 0.4em\relax Citeseer,
  1998, pp. 445--449.

\bibitem{chung2015cost}
Y.-A. Chung, H.-T. Lin, and S.-W. Yang, ``Cost-aware pre-training for
  multiclass cost-sensitive deep learning,'' \emph{arXiv preprint
  arXiv:1511.09337}, 2015.

\bibitem{wang2016training}
S.~Wang, W.~Liu, J.~Wu, L.~Cao, Q.~Meng, and P.~J. Kennedy, ``Training deep
  neural networks on imbalanced data sets,'' in \emph{Neural Networks (IJCNN),
  2016 International Joint Conference on}.\hskip 1em plus 0.5em minus
  0.4em\relax IEEE, 2016, pp. 4368--4374.

\bibitem{raj2016towards}
V.~Raj, S.~Magg, and S.~Wermter, ``Towards effective classification of
  imbalanced data with convolutional neural networks,'' in \emph{IAPR Workshop
  on Artificial Neural Networks in Pattern Recognition}.\hskip 1em plus 0.5em
  minus 0.4em\relax Springer, 2016, pp. 150--162.

\bibitem{bartlett2006convexity}
P.~L. Bartlett, M.~I. Jordan, and J.~D. McAuliffe, ``Convexity, classification,
  and risk bounds,'' \emph{Journal of the American Statistical Association},
  vol. 101, no. 473, pp. 138--156, 2006.

\bibitem{beijbom2014guess}
O.~Beijbom, M.~Saberian, D.~Kriegman, and N.~Vasconcelos, ``Guess-averse loss
  functions for cost-sensitive multiclass boosting,'' in \emph{Proceedings of
  the 31st International Conference on Machine Learning (ICML-14)}, 2014, pp.
  586--594.

\bibitem{nielsen2014neural}
M.~A. Nielsen, ``Neural networks and deep learning,'' \emph{Determination
  Press}, vol.~1, 2014.

\bibitem{espindola2005extending}
R.~Esp{\'\i}ndola and N.~Ebecken, ``On extending f-measure and g-mean metrics
  to multi-class problems,'' \emph{WIT Transactions on Information and
  Communication Technologies}, vol.~35, 2005.

\bibitem{ballerini2013color}
L.~Ballerini, R.~B. Fisher, B.~Aldridge, and J.~Rees, ``A color and texture
  based hierarchical k-nn approach to the classification of non-melanoma skin
  lesions,'' in \emph{Color Medical Image Analysis}.\hskip 1em plus 0.5em minus
  0.4em\relax Springer, 2013, pp. 63--86.

\bibitem{ballerini2012non}
------, ``Non-melanoma skin lesion classification using colour image data in a
  hierarchical k-nn classifier,'' in \emph{Biomedical Imaging (ISBI), 2012 9th
  IEEE International Symposium on}.\hskip 1em plus 0.5em minus 0.4em\relax
  IEEE, 2012, pp. 358--361.

\bibitem{simonyan2014very}
K.~Simonyan and A.~Zisserman, ``Very deep convolutional networks for
  large-scale image recognition,'' \emph{arXiv:1409.1556}, 2014.

\bibitem{lee2016generalizing}
C.-Y. Lee, P.~W. Gallagher, and Z.~Tu, ``Generalizing pooling functions in
  convolutional neural networks: Mixed, gated, and tree,'' in \emph{Proceedings
  of the 19th International Conference on Artificial Intelligence and
  Statistics}, 2016, pp. 464--472.

\bibitem{chang2015batch}
J.-R. Chang and Y.-S. Chen, ``Batch-normalized maxout network in network,''
  \emph{arXiv preprint arXiv:1511.02583}, 2015.

\bibitem{springenberg2014improving}
J.~T. Springenberg and M.~Riedmiller, ``Improving deep neural networks with
  probabilistic maxout units,'' \emph{ICLRs}, 2014.

\bibitem{lin2014stable}
T.-H. Lin and H.~Kung, ``Stable and efficient representation learning with
  nonnegativity constraints,'' in \emph{ICML}, 2014, pp. 1323--1331.

\bibitem{vedaldi2009multiple}
A.~Vedaldi, V.~Gulshan, M.~Varma, and A.~Zisserman, ``Multiple kernels for
  object detection,'' in \emph{ICCV}.\hskip 1em plus 0.5em minus 0.4em\relax
  IEEE, 2009, pp. 606--613.

\bibitem{wang2010locality}
J.~Wang, J.~Yang, K.~Yu, F.~Lv, T.~Huang, and Y.~Gong, ``Locality-constrained
  linear coding for image classification,'' in \emph{CVPR}.\hskip 1em plus
  0.5em minus 0.4em\relax IEEE, 2010, pp. 3360--3367.

\bibitem{perronnin2010improving}
F.~Perronnin, J.~S{\'a}nchez, and T.~Mensink, ``Improving the fisher kernel for
  large-scale image classification,'' in \emph{ECCV}.\hskip 1em plus 0.5em
  minus 0.4em\relax Springer, 2010, pp. 143--156.

\bibitem{yang2009linear}
J.~Yang, K.~Yu, Y.~Gong, and T.~Huang, ``Linear spatial pyramid matching using
  sparse coding for image classification,'' in \emph{CVPR}.\hskip 1em plus
  0.5em minus 0.4em\relax IEEE, 2009, pp. 1794--1801.

\bibitem{donahue2014decaf}
J.~Donahue, Y.~Jia, O.~Vinyals, J.~Hoffman, N.~Zhang, E.~Tzeng, and T.~Darrell,
  ``Decaf: A deep convolutional activation feature for generic visual
  recognition,'' in \emph{ICML}, 2014, pp. 647--655.

\bibitem{zeiler2014visualizing}
M.~D. Zeiler and R.~Fergus, ``Visualizing and understanding convolutional
  networks,'' in \emph{ECCV}.\hskip 1em plus 0.5em minus 0.4em\relax Springer,
  2014, pp. 818--833.

\bibitem{he2014spatial}
K.~He, X.~Zhang, S.~Ren, and J.~Sun, ``Spatial pyramid pooling in deep
  convolutional networks for visual recognition,'' \emph{arXiv:1406.4729},
  2014.

\bibitem{chatfield2011devil}
K.~Chatfield, V.~Lempitsky, A.~Vedaldi, and A.~Zisserman, ``The devil is in the
  details: An evaluation of recent feature encoding methods,'' 2011.

\bibitem{linlearning14}
D.~Lin, C.~Lu, R.~Liao, and J.~Jia, ``Learning important spatial pooling
  regions for scene classification,'' 2014.

\bibitem{li2013harvesting}
Q.~Li, J.~Wu, and Z.~Tu, ``Harvesting mid-level visual concepts from
  large-scale internet images,'' in \emph{CVPR}.\hskip 1em plus 0.5em minus
  0.4em\relax IEEE, 2013, pp. 851--858.

\bibitem{razavian2014cnn}
A.~S. Razavian, H.~Azizpour, J.~Sullivan, and S.~Carlsson, ``Cnn features
  off-the-shelf: An astounding baseline for recognition,'' in
  \emph{CVPRw}.\hskip 1em plus 0.5em minus 0.4em\relax IEEE, 2014, pp.
  512--519.

\bibitem{juneja2013blocks}
M.~Juneja, A.~Vedaldi, C.~Jawahar, and A.~Zisserman, ``Blocks that shout:
  Distinctive parts for scene classification,'' in \emph{CVPR}.\hskip 1em plus
  0.5em minus 0.4em\relax IEEE, 2013, pp. 923--930.

\bibitem{doersch2013mid}
C.~Doersch, A.~Gupta, and A.~A. Efros, ``Mid-level visual element discovery as
  discriminative mode seeking,'' in \emph{NIPS}, 2013, pp. 494--502.

\bibitem{GongMOP14}
Y.~Gong, L.~Wang, R.~Guo, and S.~Lazebnik, ``Multi-scale orderless pooling of
  deep convolutional activation features,'' in \emph{ECCV}.\hskip 1em plus
  0.5em minus 0.4em\relax Springer International Publishing, 2014, pp.
  392--407.

\bibitem{mani2003knn}
I.~Mani and I.~Zhang, ``Knn approach to unbalanced data distributions: a case
  study involving information extraction,'' in \emph{WLID}, 2003.

\bibitem{tang2009svms}
Y.~Tang, Y.-Q. Zhang, N.~V. Chawla, and S.~Krasser, ``Svms modeling for highly
  imbalanced classification,'' \emph{TSMC}, vol.~39, no.~1, pp. 281--288, 2009.

\bibitem{chen2004using}
C.~Chen, A.~Liaw, and L.~Breiman, ``Using random forest to learn imbalanced
  data,'' \emph{University of California, Berkeley}, 2004.

\bibitem{he2009learning}
H.~He and E.~A. Garcia, ``Learning from imbalanced data,'' \emph{TKDE},
  vol.~21, no.~9, pp. 1263--1284, 2009.

\bibitem{chang2011libsvm}
C.-C. Chang and C.-J. Lin, ``Libsvm: a library for support vector machines,''
  \emph{TIST}, vol.~2, no.~3, p.~27, 2011.

\bibitem{abe2004iterative}
N.~Abe, B.~Zadrozny, and J.~Langford, ``An iterative method for multi-class
  cost-sensitive learning,'' in \emph{Proceedings of the tenth ACM SIGKDD
  international conference on Knowledge discovery and data mining}.\hskip 1em
  plus 0.5em minus 0.4em\relax ACM, 2004, pp. 3--11.

\end{thebibliography}

\ifCLASSOPTIONcaptionsoff
  \newpage
\fi



\end{document}